\title{\LARGE \bf
Gradient Descent Finds Global Minima for Generalizable Deep Neural Networks of Practical Sizes}
\author{Kenji Kawaguchi\thanks{Kenji Kawaguchi was supported in part by NSF grants (No.1523767 and 1723381), AFOSR grant (No. FA9550-17-1-0165), ONR grant (N00014-18-1-2847), Honda Research,  and the MIT-Sensetime Alliance on AI.} \\MIT \\ Email: kawaguch@mit.edu \and Jiaoyang Huang \\ Harvard University \\ Email: jiaoyang@math.harvard.edu}
\begin{document}

\maketitle
\thispagestyle{empty}
\pagestyle{empty}
\thispagestyle{fancy}

\begin{abstract}
In this paper, we theoretically prove that gradient descent can find a global minimum of non-convex optimization of all layers for nonlinear deep neural networks of sizes commonly encountered in practice. The theory developed in this paper only requires the practical degrees of over-parameterization unlike previous theories. Our theory only requires the number of trainable parameters to increase linearly as the number of training samples increases. This allows the size of the deep neural networks to be consistent with practice and to be several orders of magnitude smaller than that required by the previous theories. Moreover, we prove that the linear increase of the size of the network is the optimal rate and that it cannot be improved, except by a logarithmic factor. Furthermore, deep neural networks with the trainability guarantee are shown to generalize well to unseen test samples with a natural dataset but not a random dataset.
\end{abstract}

\section{Introduction}
Deep neural networks have recently achieved significant empirical success in the fields of machine learning and its applications. Neural networks  have been theoretically studied for a long time, dating back to the days of multilayer perceptron, with focus on the  \textit{expressivity} of shallow neural networks   \cite{DBLP:journals/jc/Baum88,DBLP:journals/tc/Cover65,DBLP:journals/tnn/Huang03,DBLP:journals/tnn/HuangH91,DBLP:journals/tnn/HuangB98,yamasaki1993lower}. More recently, the  expressivity of neural networks was theoretically investigated for  modern deep architectures with rectified linear units (ReLUs) \cite{Yun2018small},  residual maps  \cite{hardt2016identity}, and/or  convolutional and max-pooling layers \cite{DBLP:conf/icml/Nguyen018}.

However, the expressivity  of a neural network does not ensure its \textit{trainability}. The expressivity  of a neural network   states that, given a  training dataset, there exists an  optimal parameter vector for  the  neural network  to interpolate that given  dataset.  It does not guarantee that an  algorithm  will be able to  find such an optimal vector, efficiently, during the training of neural networks. Indeed, 
finding  the optimal vector for a neural network  has been proven to be an NP-hard problem, in some cases \cite{bartlett1999hardness,blum1989training,livni2014computational}.

Quite recently, it was proved in a series of  papers that, if the size of a neural network is significantly larger than the size of the dataset, the (stochastic) gradient descent algorithm can find  an  optimal vector  for shallow \cite{li2018learning,du2018gradient1,song2019quadratic} and deep networks \cite{allen2018convergence,du2018gradient2,zou2018stochastic}. However, a considerable gap still exists between these trainability results and the  expressivity theories; i.e., these trainability results require a significantly larger number of  parameters, when compared to the expressivity theories. Table \ref{table:1}
summarizes the number of parameters required by each previous theory, in terms of the size $n$ of the dataset,  where  the $\tilde \Omega(\cdot)$ notation ignores the logarithmic factors and the $\poly(\cdot)$ notation hides the significantly large unknown polynomial dependencies: for example, $\poly(n)\ge n^{60}$ in \cite{allen2018convergence}.

\begin{table}[t!] 
\centering \renewcommand{\arraystretch}{0.5}
\caption{Number of parameters required to ensure the trainability, in terms of $n$, where $n$ is the number of samples in a training dataset and  $H$ is the number of hidden layers.  } \label{table:1}
\begin{tabular}{cccc}
\toprule
Reference & \# Parameters & Depth $H$ & Trainability  \\ 
\midrule
\cite{DBLP:journals/tnn/Huang03,DBLP:journals/tnn/HuangH91,DBLP:journals/tnn/HuangB98} &$\tilde \Omega(n)$ & 1,2 & No (expressivity only)\\
\midrule
\cite{hardt2016identity,DBLP:conf/icml/Nguyen018,Yun2018small}  & $\tilde \Omega(n)$ & any $H$ & No (expressivity only)\\
\midrule
\cite{li2018learning} & $\tilde\Omega(\poly(n))$ & 1 & Yes\\
\midrule
\cite{du2018gradient1} & $\tilde\Omega(n^6)$ & 1 & Yes\\
\midrule
\cite{song2019quadratic} & $\tilde\Omega(n^2)$ & 1 & Yes\\\midrule
\cite{allen2018convergence, zou2018stochastic} & $\tilde\Omega(\poly(n,H))$ & any $H$ & Yes\\
\midrule
\cite{du2018gradient2} & $\tilde\Omega(2^{O(H)}n^8)$ & any $H$ & Yes\\
\midrule
\cite{zou2019improved} & $\tilde\Omega(H^{12}n^8)$ & any $H$ & Yes\\ 
\midrule
this paper & $\tilde\Omega(n)$ & any $H$ & Yes\\
\bottomrule
\end{tabular}
\end{table}

\begin{figure}[t!]
\centering
\begin{subfigure}[b]{0.49\columnwidth}\centering
  \includegraphics[width=\columnwidth,height=0.6\columnwidth]{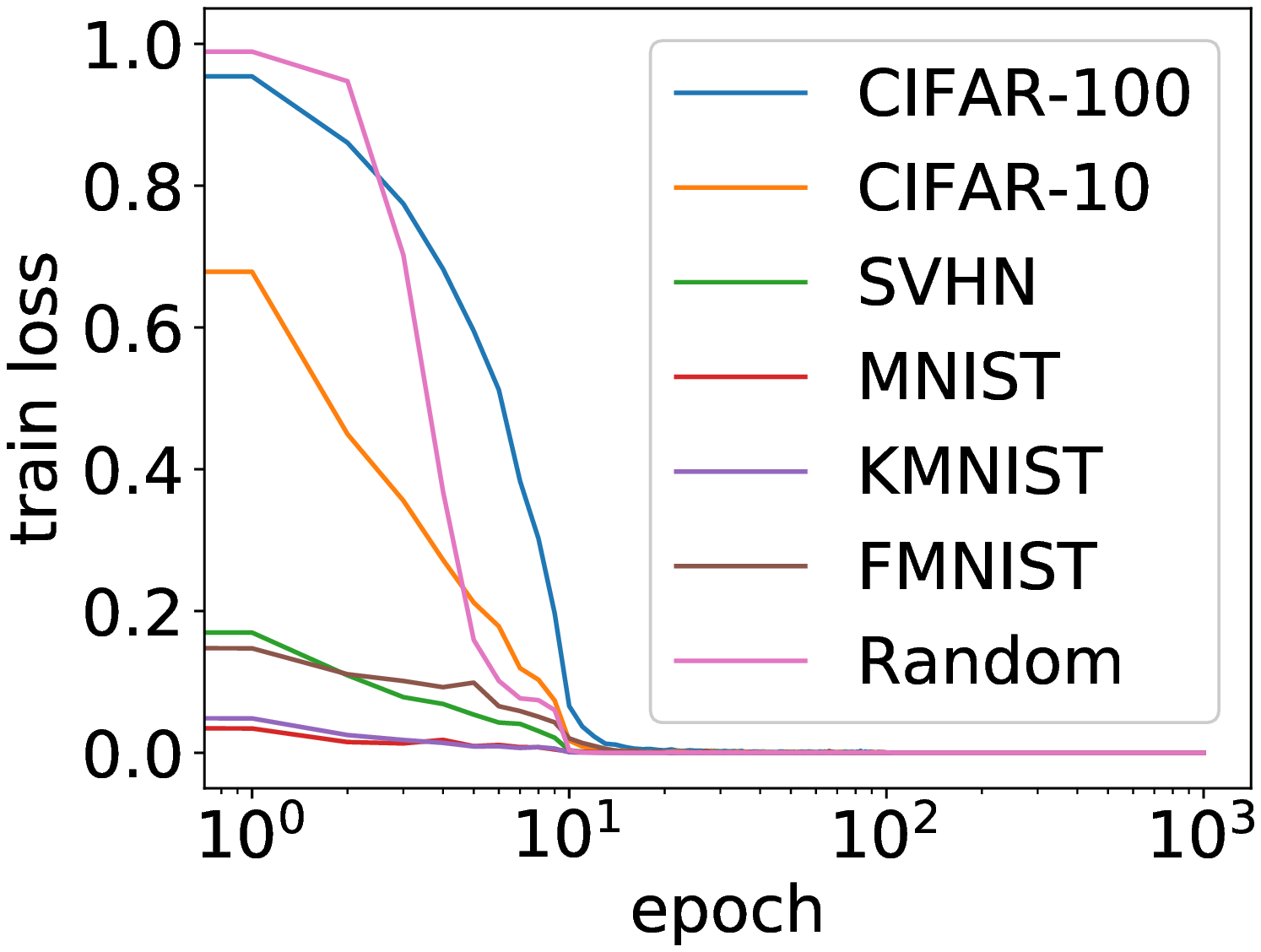}
  \caption{Training loss}  \vspace{-4pt} \label{fig:1:1}
\end{subfigure}
\begin{subfigure}[b]{0.49\columnwidth}
  \includegraphics[width=\columnwidth,height=0.6\columnwidth]{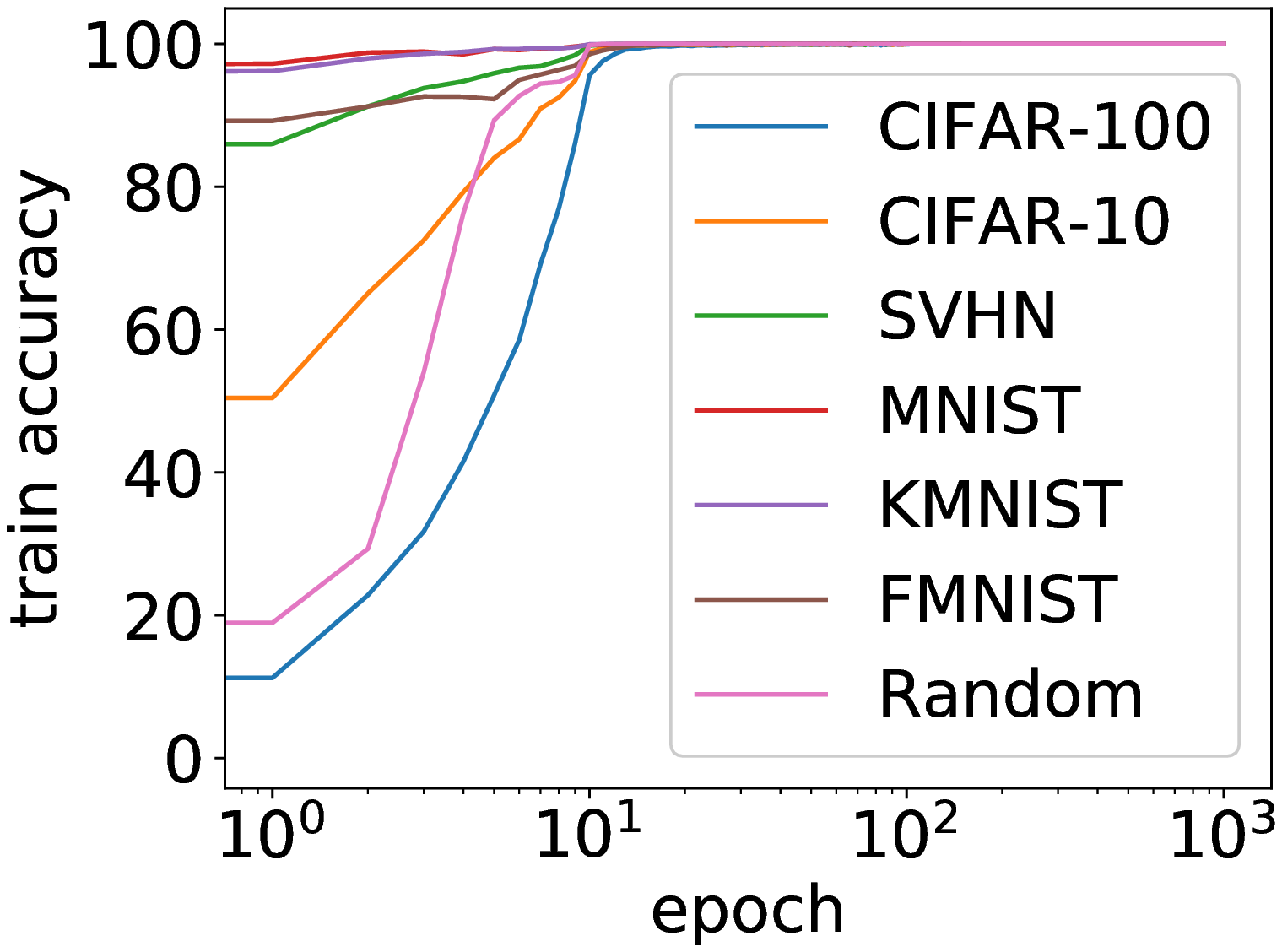}
  \caption{Training accuracy}  \vspace{-4pt}\label{fig:1:2} 
\end{subfigure}
\caption{Training loss and accuracy versus the number of epochs (in log scale) for pre-activation ResNet with  18 layers.  Training accuracy reaches 100\% (and training loss is approximately zero) for all datasets, even though the number of total parameters  is several orders of magnitude smaller than that required by the  previous theories.    } 
\label{fig:1}
\end{figure}

There is also a significant gap between the trainability theory and common practice. Typically, deep neural networks used in practical applications are trainable, and yet, much smaller than what the previous theories require to ensure trainability. 
Figure \ref{fig:1} illustrates this fact with various datasets and a pre-activation ResNet with  18 layers (PreActResNet18), which is widely used in practice. FMNIST represents the Fashion MNIST. RANDOM represents a randomly generated dataset of size 50000$ $ (with   the inputs being  $3 \times 24 \times 24$ images  of pixels drawn randomly from the standard normal distribution and the target being integer labels drawn uniformly from between 0 and 9). Here, the sizes of the training datasets vary from 50000 to 73257. For these datasets, the  previous theories require at least $n^8=(50000)^{8}$ parameters for the deep neural network to be trainable, which is several orders of magnitude larger than the number of parameters of  PreActResNet18
(11169994 parameters) or even larger networks such as WideResNet18 (36479219 parameters).

In this paper, we aim to bridge these gaps  by theoretically proving  the upper   and lower bounds for the number of  parameters required to ensure trainability. In particular, we show that  deep neural networks  with $\tilde \Omega(n)$ parameters are efficiently trainable by using a gradient descent algorithm.  That is, our theory only requires the number of total parameters to be in the  order  of $n$, which  matches the practical observations. Moreover, we demonstrate that trainable deep neural networks of size  $\tilde \Omega(n)$ are generalizable  to unseen test  points with a natural dataset, but not with  a random dataset.

\section{Preliminaries}
This paper studies feedforward neural networks with $H$ hidden layers, where $H \ge 1$ is arbitrary. Given an input vector  $x\in {\mathbb R}^{m_x}$ and a  parameter vector $\theta$, the output of the neural network is given by
\begin{equation}\label{eq:f}
f(x,\theta)=W^{(H+1)}x^{(H)}+b^{(H+1)} \in {\mathbb R}^{m_y}, 
\end{equation}
where $W^{(H+1)}\in \RR^{m_y \times m_{H}}$ and $b^{(H+1)}\in \RR^{m_y}$ are the weight matrix and bias term, respectively, of the output layer. The output of the last hidden layer $x^{(H)}$ is given by
the set of recursive equations:  $x^{(0)}=x$ and
\begin{equation}\label{e:defDNN}
x^{(l)}=\frac{1}{\sqrt{m_l}}\sigma(W^{(l)}x^{(l-1)}+b^{(l)}),\quad l=1,2,\cdots, H,
\end{equation}
where $W^{(l)}\in \RR^{m_l \times m_{l-1}}$ is the weight matrix, $b^{(l)}\in \RR^{m_l}$ is the bias term,  and $\sigma$ is the activation unit, which is applied coordinate-wise to its input. Here,   $x^{(l)}$ is the output of the $l$-th layer, which  has $m_l$ neurons. 

Then, the vector containing all trainable parameters is given by $\theta=(\vect(\Wbar^{(1)})\T, \dots, \vect(\Wbar^{(H+1)})\T)\T$, where $\Wbar^{(l)} =[W^{(l)},b^{(l)}]$ and  $\vect(M)$ represents the standard vectorization of the matrix $M$.  Thus, the total number of trainable parameters is 
\begin{equation*}
    d=\sum_{l=0}^{H}(m_l m_{l+1}+m_{l+1}),
\end{equation*}
where $m_{0}=m_x$ and $m_{H+1}=m_y$.

This paper analyzes the trainability in terms of the standard objective  of empirical risk minimization: 
$$
J(\theta) = \frac{1}{n}\sum_{i=1}^n \ell(f(x_{i},\theta),y_{i}),
$$
where $\{(x_i,y_i)\}_{i=1}^n$ is a training dataset, $y_i $ is the $i$-th target, and $\ell(\cdot,y_i)$ represents a loss criterion such as the 
squared loss or cross-entropy loss. 
The following assumptions are employed for the loss criterion $q \mapsto \ell(q,y_{i})$ and activation unit $\sigma(x)$:
\begin{assumption}\label{a:loss}
\emph{(Use of common loss criteria)} For any $i \in \{1,\dots,n\}$, the function $\ell_i(q)=\ell(q,y_{i}) \in \RR_{\ge 0}$ is differentiable and convex, and $\nabla \ell_{i}$ is $\zeta$-Lipschitz (with the metric induced by the Euclidian norm $\|\cdot\|_{2}$). 
\end{assumption}

\begin{assumption}\label{a:activation}
\emph{(Use of common activation units)}
The activation function $\sigma(x)$ is real analytic, monotonically  increasing, $1$-Lipschitz, and the limits exist as: $\lim_{x\rightarrow -\infty}\sigma(x)=\sigma_->-\infty$ and $\lim_{x\rightarrow +\infty}\sigma(x)=\sigma_+\leq+\infty$.
\end{assumption}
Assumption \ref{a:loss} is satisfied by simply using a common loss criterion such as the squared loss or cross-entropy loss.
For example, $\zeta=2$ for the squared loss, as $\|\nabla\ell_i(q)-\nabla\ell_i(q')\|_2\le 2 \|q-q'\|_2$.
The training objective function $J(\theta)$  is  nonconvex in $\theta$, even if the loss criterion $q \mapsto \ell(q,y_{i})$ is  convex in $q$. 
 
Assumption \ref{a:activation} is  satisfied by using common  activation units such as sigmoid and hyperbolic tangents. Moreover, the softplus activation,  which is defined as $\sigma_\alpha(x)=\ln(1+\exp(\alpha x))/\alpha$,  satisfies Assumption \ref{a:activation} with any hyperparameter $\alpha \in \RR_{>0}$. The softplus activation can approximate the ReLU activation  for any desired accuracy as 
$$
\sigma_{\alpha}(x) \rightarrow \mathrm{relu}(x) \text{ as } \alpha\rightarrow \infty,
$$
where $\mathrm{relu}$ represents the ReLU activation.

Throughout this paper,   neural networks are initialized with random Gaussian weights, following the common initialization schemes used in practice. More precisely, the initial parameter vector $\theta^0$  is randomly drawn as $(W^{(l)}_{ij})^0 \sim \mathcal N(0, c_w)$ and  $(b^{(l)})^0 \sim \mathcal N(0,c_b)$, where $c_w$ and $c_b$ are constants and $(W^{(l)}_{ij})^0$ and $(b^{(l)})^0$ correspond to the initial vector $\theta^0$ as $\theta^0=(\vect((\Wbar^{(1)})^{0})\T,\allowbreak \dots, \allowbreak \vect((\Wbar^{(H+1)})^0)\allowbreak\T)\T$ with $(\Wbar^{(l)})^{(0)}=[(W^{(l)})^{(0)},(b^{(l)})^{(0)}]$.
With this random initialization scheme,  the outputs are normalized properly as $\|x^{(l)}\|_2^2=O(1)$ for $0\leq l\leq H$, and $\|f(x,\theta)\|_2^2=O(m_y)$ with high probability.

\section{Main Trainability Results}

This section first introduces the formal definition of trainability, in terms of the number $d$ of parameters, and then presents our main results for the trainability.

\subsection{Problem formalization}

The goal of this section is to formalize the question of trainability in terms of the number of  parameters, $d$. Intuitively, given the dataset size $n$, depth $H$, and any $\delta>0$, we  define the \textit{probable trainability} $\Pcal_{n,H,\delta}$ as $\Pcal_{n,H,\delta}(d)=\true$ if having $d$ parameters can ensure the trainability for all datasets with probability at least $1-\delta$, and $\Pcal_{n,H,\delta}(d)=\false$ otherwise. We formalize this intuition as follows.

Let activation units $\sigma$ satisfy Assumption \ref{a:activation}. Let $\Fcal_d^H$  be the set of all   neural network architectures $f(\cdot,\cdot)$ of the  form in equation \eqref{eq:f} with    $H$ hidden layers and at most $d$ parameters. 
Let  $\Scal_{n}$  be   the set of all training datasets $S=\{(x_i,y_i)\}_{i=1}^n$ of size $n$ such that the data points are normalized as  $\|x_i\|_2^2=1$ and  $y_i\in[-1,1]^{m_y}$ for all $i \in \{1,\dots,n\}$. Let $\Lcal_S^\zeta$ be a set of all  loss functionals $L$ such that for any $L \in \Lcal_S^\zeta$, we have 
$
L(g)=\frac{1}{n}\sum_{i=1}^n \ell(g(x_{i}),y_{i})
$
and $\argmin_{g:\RR^{m_{x}}\rightarrow \RR^{m_y}} L(g)\neq \emptyset$, 
where  $g: \RR^{m_{x}}\rightarrow \RR^{m_y}$ is a function, $S \in \Scal_n$ is a training dataset, and $q \mapsto \ell(q,y_{i})$ is a loss criterion satisfying Assumption \ref{a:loss}.
For any $(\theta,\Wbar)$, we define $\psi_l(\theta,\Wbar) \in \RR^{d}$ to be the  parameter vector $\theta$ with  the corresponding $\bar W^{(l)}$ entries replaced by $\bar W$. For example, $\psi_{H+1}(\theta,\bar W)=(\vect(\Wbar^{(1)})\T, \allowbreak  \dots, \allowbreak \vect(\Wbar^{(H)})\T, \allowbreak \vect(\Wbar  )\T)\T$. We use the symbol  $\odot$ to represent the entrywise product (i.e., Hadamard product).

With these notations, we can now formalize the probable trainability $\Pcal_{n,H,\delta}$, in terms of $d$, as follows: 

\begin{definition} \label{def:prob}

 $\Pcal_{n,H,\delta}: \NN \rightarrow \{\true,\false\} $ is a function such that $\Pcal_{n,H,\delta}(d)=\true$ if and only if the following statement holds true: $\forall \zeta>0,\exists f\in \Fcal_d^H, \exists \eta \in \RR^d, \forall S \in \Scal_{n},\forall L \in \Lcal_S^\zeta$, $\exists c_{\theta} \in \RR$, and $\forall \epsilon>0$, with probability at least $1-\delta$ (over randomly drawn initial weights $\theta^{0}$), there exists $t = O(c_r \zeta/\epsilon)$ such that 
\begin{align} \label{eq:1}
J(\theta^{t})=L(f(\cdot,\theta^{t}))\le L(f^*)+\epsilon ,
\end{align}
and $\|\theta^t\|_2^2 \le c_{\theta}$,
where  $f^* \in \argmin_{g_{_{\vphantom{\&}}}:\RR^{m_{x}}\rightarrow \RR^{m_y}} L(g)$ is a global minimum of the functional $L$, $(\theta^{k})_{k\in\NN}$ is the  sequence generated by the gradient descent algorithm $\theta^{k+1}=\theta^k-\eta \odot\nabla J(\theta^{k})$, and $c_r = \max_{l\in \{1,\dots,H+1\}} \inf_{\Wbar^{*} \in \Wcal^*_l}\|\Wbar^{*} -(\Wbar^{(l)})^{0}\|^2_F$ with $\Wcal^*_l= \argmin_{\Wbar} L(f(\cdot,\psi_l(\theta^{0},\Wbar)))$.
     
\end{definition}
Here, $\Pcal_{n,H,\delta}(d)=\true$ implies  that a gradient descent algorithm finds a global minimum of all layers of a deep neural network with $d$ trainable parameters for any dataset (if a global minimum exists). To verify this, let $\tilde \Pcal_{n,H,\delta}$ be equivalent  to  $\Pcal_{n,H,\delta}$, except that inequality \eqref{eq:1} is replaced by 
\vspace{-10pt}
\begin{align} \label{eq:3}
L(f(\cdot,\theta^{t}))\le L(f(\cdot,\theta^*))+\epsilon,
\end{align}
where $\theta^* \in \RR^d$ is a global minimum of $J(\theta)=L(f(\cdot,\theta))$. As $L(f^*)\le L(f(\cdot,\theta^*))$,   $\Pcal_{n,H,\delta}(d)=\true$ implies that   $\tilde \Pcal_{n,H,\delta}(d)=\true$, which is the desired statement. 

The reason we use $\Pcal_{n,H,\delta}(d)$ instead of $\tilde \Pcal_{n,H,\delta}(d)$ is that $\tilde \Pcal_{n,H,\delta}(d)$ admits trivial and unpreferred solutions  in  that a global minimum $\theta^*$ can have a  large loss value   $L(f(\cdot,\theta^*))$ when   $f$ is restricted. As an extreme example, one can set $f$ to be a neural network with only one trainable parameter. Then, a bijection search can trivially find a global minimum with a large loss value. The use of $\Pcal_{n,H,\delta}(d)$ instead of $\tilde \Pcal_{n,H,\delta}(d)$ forces us to find nontrivial solutions with  small loss values.  

In the definition of $\Pcal_{n,H,\delta}(d)$, the network architecture $f$ and learning rate $\eta$ must be fixed for all datasets. This  forces the gradient descent algorithm to actually learn the predictor based on each  dataset, instead of encoding too much information  into the  architecture and learning rate.  

\subsection{Analysis}
The following theorem states that the probable trainability is ensured with the total parameter number $d$ being linear in $n$:

\begin{theorem} \label{thm:1}
For any $n \in \NN^+$,  $H \ge 2$, and  $\delta>0$, it holds that   $\Pcal_{n,H,\delta}(d)=\true$  for any
\begin{align*}
\scalebox{0.93}{$\displaystyle d\geq \mathfrak c \left(\left(n+m_x H^2+H^5\log \left(\frac{Hn^2}{\delta}\right)
     \right)\log \left(\frac{Hn^2}{\delta}\right) + n m_y\right)$},
\end{align*}
where  $\mathfrak c>0$ is a  universal constant. 
\end{theorem}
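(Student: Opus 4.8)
\emph{Proof plan.} The plan is to exhibit a suitable architecture, show that the random initialization is ``good'' with probability $\ge 1-\delta$, run the descent lemma, and turn the convexity of the last-layer subproblem into a Polyak--\L{}ojasiewicz-type inequality along the trajectory. Write $\kappa:=\log(Hn^2/\delta)$ and let $V$ denote the block of $\theta$ corresponding to $\Wbar^{(H+1)}$. For the architecture I take all hidden layers narrow except the last: $m_1=\cdots=m_{H-2}=\Theta(H^2\kappa)$, $m_{H-1}=\Theta(\kappa)$, $m_H=\Theta(n)$. Substituting into $d=\sum_{l=0}^{H}(m_lm_{l+1}+m_{l+1})$ with $m_0=m_x$, $m_{H+1}=m_y$, the leading terms are $m_xm_1=\Theta(m_xH^2\kappa)$, $\sum_{l=2}^{H-2}m_{l-1}m_l=\Theta(H^5\kappa^2)$, $m_{H-1}m_H=\Theta(n\kappa)$, and $m_Hm_y=\Theta(nm_y)$, whose sum is at most the stated bound on $d$. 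Fix this $f\in\Fcal_d^H$ and a learning-rate vector $\eta\in\RR^d$ with common coordinate value $\eta_0=\Theta(1/(\zeta B))$, where $B$ is the architecture-dependent smoothness constant from the third paragraph; both are fixed before $S$, $L$, $\epsilon$, as Definition~\ref{def:prob} requires.

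\textbf{Good initialization.} Fix $S\in\Scal_n$ and $L\in\Lcal_S^\zeta$; after identifying coincident inputs (which only changes $L(f^*)$ and the realizability constraints consistently) assume the $x_i$ distinct. Gaussian concentration for $\theta^0$ and a union bound over $O(Hn)$ events give, with probability $\ge1-\delta/2$, that the layerwise maps and their first two derivatives at the data are uniformly $O(1)$, in particular $\|x_i^{(l),0}\|_2^2=\Theta(1)$; the width floor $m_l=\Omega(H^2\kappa)$ is what stops the $H$ layers of fluctuation from compounding. Since $\sigma$ is real analytic and, by Assumption~\ref{a:activation}, not a polynomial (its finite left limit rules that out), for distinct $x_i$ the vectors $v_i:=[x_i^{(H),0};1]\in\RR^{m_H+1}$ are almost surely linearly independent; hence with probability $\ge1-\delta/2$ the affine map $\Wbar\mapsto(\Wbar v_i)_{i=1}^n$ is onto $(\RR^{m_y})^n$, so $\Wcal^*_l\neq\emptyset$ for each $l$ and $\min_{\Wbar}L(f(\cdot,\psi_{H+1}(\theta^0,\Wbar)))=L(f^*)$; in particular $c_r<\infty$. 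Set $c_\theta:=2\|\theta^0\|_2^2+O(c_r)$.

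\textbf{Descent and convergence rate.} On the relevant sublevel region, boundedness of $\sigma,\sigma',\sigma''$, $1$-Lipschitzness of $\sigma$, and $\zeta$-Lipschitzness of $\nabla\ell_i$ (Assumption~\ref{a:loss}) make $J$ $\beta$-smooth with $\beta=\zeta B$; with $\eta_0\le 1/\beta$ the descent lemma gives $J(\theta^{k+1})\le J(\theta^k)-\tfrac{\eta_0}{2}\|\nabla J(\theta^k)\|_2^2$, so $J$ is non-increasing and $\|\theta^t\|_2^2\le c_\theta$ throughout. Now let $G_\theta(\Wbar):=L(f(\cdot,\psi_{H+1}(\theta,\Wbar)))$, which is convex in $\Wbar$ because $f(\cdot,\psi_{H+1}(\theta,\Wbar))$ is affine in $\Wbar$ and each $\ell_i$ convex; convexity together with $\|\nabla J(\theta)\|_2\ge\|\nabla_V J(\theta)\|_2=\|\nabla G_\theta(V)\|_2$ gives
\begin{align*}
J(\theta)-\min_{\Wbar}G_\theta\ \le\ \|\nabla J(\theta)\|_2\,\|V-V_\star(\theta)\|_2,\qquad V_\star(\theta)\in\argmin G_\theta.
\end{align*}
If, for every $k$ up to the stopping time, (a)~$\min_{\Wbar}G_{\theta^k}\le L(f^*)+\epsilon/2$ and (b)~$\|V_k-V_\star(\theta^k)\|_2^2\le 2c_r$, then $\|\nabla J(\theta^k)\|_2^2\ge(J(\theta^k)-L(f^*)-\epsilon/2)_+^2/(2c_r)$, and feeding this into the descent lemma, the recursion $a_{k+1}\le a_k-(\eta_0/4c_r)a_k^2$ with $a_k:=(J(\theta^k)-L(f^*)-\epsilon/2)_+$ yields $a_k=O(c_r\zeta/k)$; hence $J(\theta^t)\le L(f^*)+\epsilon$ for some $t=O(c_r\zeta/\epsilon)$ with $\|\theta^t\|_2^2\le c_\theta$. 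Since $\zeta,S,L,\epsilon$ were arbitrary, $\Pcal_{n,H,\delta}(d)=\true$.

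\textbf{Main difficulty.} The crux is conditions (a)--(b): that the last-hidden-layer features stay conditioned enough for the convex last-layer subproblem to remain within $\epsilon/2$ of $L(f^*)$, and that the last-layer iterate $V_k$ stays within $O(\sqrt{c_r})$ of that subproblem's minimizer, for \emph{all} $t=O(c_r\zeta/\epsilon)$ steps---a number that diverges as $\epsilon\to 0$ while $\eta$ and the architecture are already frozen, so the usual ``iterates stay near initialization'' shortcut is unavailable. This demands a quantitative trajectory analysis bounding how the drift of layers $1,\dots,H$ can erode the initialization certificate of the second paragraph; it is also exactly where the $m_xH^2+H^5\kappa$ width overhead and the hypothesis $H\ge 2$ are spent, to keep the depth-dependent constants polynomial and to damp error propagation through the $H$ layers. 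If (a) or (b) fails along the way, one argues directly that $J$ has by then fallen below $L(f^*)+\epsilon$. I expect this trajectory-control step to be the real obstacle; the analyticity argument and the convex-rate recursion are comparatively routine.
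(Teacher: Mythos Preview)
The gap you flag as the ``main difficulty'' is real under your setup, and you have not closed it. But it is entirely self-inflicted: Definition~\ref{def:prob} allows $\eta\in\RR^d$ to be any vector, with the update $\theta^{k+1}=\theta^k-\eta\odot\nabla J(\theta^k)$ taken coordinatewise. The paper simply sets $\eta_i=0$ on every coordinate outside the last-layer block $\Wbar^{(H+1)}$ and $\eta_i=1/(c_z\zeta)$ on that block. Layers $1,\dots,H$ then never move from initialization, so the features $x_i^{(H)}\equiv x_i^{(H),0}$ are frozen; your conditions (a) and (b) hold trivially for every $k$, and gradient descent reduces to ordinary convex minimization of $\bar J(w)=L(f(\cdot,\psi_{H+1}(\theta^0,\Wbar)))$, for which the standard descent-plus-convexity telescoping gives $\bar J(w^t)\le L(f^*)+O(c_r\zeta/t)$ directly, together with the bound $\|w^*-w^t\|_2\le\|w^*-w^0\|_2$ that yields $\|\theta^t\|_2^2\le c_\theta$.

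Your architecture choice, parameter count, and initialization analysis (Gaussian concentration across layers to separate the $x_i^{(H-1)}$, analyticity of $\sigma$ to obtain almost-sure full rank of the last-hidden-layer feature matrix when $m_H\ge n$) all match the paper. The divergence is solely in the choice of $\eta$: by insisting on a common step size across all coordinates you commit yourself to controlling the drift of $H$ nonlinear layers over an unbounded number $O(1/\epsilon)$ of steps at fixed width, which is genuinely hard and would plausibly require NTK-scale overparameterization to establish. The paper sidesteps that problem rather than solving it; once you notice that the quantifier order permits $\eta$ to zero out the non-last-layer coordinates, your entire third paragraph becomes unnecessary.
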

\begin{remark} \label{rem:1}
In Theorem \ref{thm:1}, we restrict ourselves to the case of $H\geq 2$. If $H=1$, then by setting $m_{H-1}=m_x$ and $x_i^{(H-1)}=x_i$ in the proof of Theorem \ref{thm:1}, it holds that $\Pcal_{n, 1,\delta}(d)=\true$ for any $d\geq \mathfrak cn(m_x+m_y)$. In practice, $m_x$ would be much larger than $m_y$, and if this is the case, the lower bound $\tilde\Omega(nm_x)$ for the case of $H=1$  is worse than the lower bound $\tilde\Omega(nm_y)$ in Theorem \ref{thm:1}.

\end{remark}

In other words, Theorem \ref{thm:1} and Remark \ref{rem:1} state that  there are trainable neural networks  of size $\tilde \Omega(n m_y+m_x H^2+H^5)$ if $H\ge2$, and size $\tilde \Omega(n (m_x+m_y))$ if $H=1$. This is significantly smaller than the sizes required by the  previous studies.   
 For deep neural networks, a state-of-the-art result, in terms of the size, is given in \cite{du2018gradient2}, where the neural networks are required to have size $\tilde\Omega(2^{O(H)}n^8+n^4(m_x+m_y))$.
For shallow networks, building on previous works \cite{li2018learning,du2018gradient1}, it has been proven in  \cite{song2019quadratic}  that, single-layer networks of size $\tilde\Omega(n^2(m_x+m_y))$ are trainable.
 Theorem \ref{thm:1}  proves the probable trainability for considerably smaller networks when compared with the previous results. 

Then, a natural question is  whether we can further improve Theorem \ref{thm:1} by reducing $d$ while keeping the probable trainability. The following theorem and its corollary state that Theorem \ref{thm:1} is already optimal and that it cannot be improved in terms of the order of the leading term $n m_y$: 

\begin{theorem} \label{thm:2}
There exists a universal constant $\mathfrak c>0$ such that the following holds: for any large $\beta>0$, $\frac{n m_y}{d}-1 \ge \frac{\mathfrak c \beta H\log n}{\log (1/\epsilon)}$, and deep neural network  architecture $f\in \mathcal F_d^H$, there exists a dataset $S\in \mathcal S$ such that if
\begin{align}
    \sum_{i=1}^n\|f(x_i,\theta)-y_i\|_2^2\leq \epsilon,
\end{align}
then 
$\|\theta\|^2_2 \ge n^{\beta}$.
\end{theorem}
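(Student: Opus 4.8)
The plan is to argue by a volumetric (covering-number) estimate. The point is that with only $d$ trainable parameters, the family of maps the network realizes under a norm budget $\|\theta\|_2^2\le R$ becomes, once evaluated on the $n$ training inputs, a subset of $\RR^{nm_y}$ whose metric complexity is controlled by $d$; if $nm_y$ exceeds $d$ by the stated margin, this subset is too thin to come $\sqrt\epsilon$-close to more than a negligible fraction of target vectors in the cube $[-1,1]^{nm_y}$, and any target outside it, placed at the inputs, gives the desired dataset once we take $R=n^{\beta}$.

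In detail, I would fix the given architecture $f\in\mathcal F_d^H$ and \emph{any} $n$ unit vectors $x_1,\dots,x_n\in\RR^{m_x}$ (no genericity is needed), set $R=n^{\beta}$ and $B_R=\{\theta\in\RR^d:\|\theta\|_2^2\le R\}$, and consider the evaluation map $\Phi:\RR^d\to\RR^{nm_y}$ given by $\Phi(\theta)=(f(x_1,\theta),\dots,f(x_n,\theta))$. The first step is to bound the Lipschitz constant $\Lambda:=\sup_{\theta\in B_R}\|\partial\Phi/\partial\theta\|_{\mathrm{op}}$, which is finite since $\sigma$ is real analytic and hence $\Phi$ is $C^1$. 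Propagating norm bounds through the recursion~\eqref{e:defDNN} and using that $\sigma$ is $1$-Lipschitz, $m_l\ge1$, $\|x_i\|_2=1$, and on $B_R$ one has $\|W^{(l)}\|_{\mathrm{op}}\le\|W^{(l)}\|_F\le\sqrt R$ and $\|b^{(l)}\|_2\le\sqrt R$, a layerwise induction yields $\|x^{(l)}\|_2\le C_0(\sqrt R)^{l}$ and then, by the chain rule, $\|\partial f(x_i,\theta)/\partial\theta\|_{\mathrm{op}}\le C_1(\sqrt R)^{H+1}$ for universal constants $C_0,C_1$ depending only on $\sigma(0)$; stacking over $i$ gives $\Lambda\le C_1\sqrt n\,(\sqrt R)^{H+1}$.

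The second step is the standard net estimate: $B_R$ has a $\rho$-net of size at most $(3\sqrt R/\rho)^d$, whose image under $\Phi$ is a $(\Lambda\rho)$-net of $\Phi(B_R)$; taking $\rho=\sqrt\epsilon/\Lambda$ produces a $\sqrt\epsilon$-net of $\Phi(B_R)$ of cardinality at most $N:=(3\sqrt R\,\Lambda/\sqrt\epsilon)^d$. Hence the set of ``fittable'' targets $\mathcal Y:=\{\,y=(y_1,\dots,y_n)\in\RR^{nm_y}:\exists\,\theta\in B_R,\ \sum_{i}\|f(x_i,\theta)-y_i\|_2^2\le\epsilon\,\}$ lies in a union of $N$ Euclidean balls of radius $2\sqrt\epsilon$, so $\mathrm{vol}(\mathcal Y)\le N\,\kappa_{nm_y}(2\sqrt\epsilon)^{nm_y}$, where $\kappa_{nm_y}\le1$ is the volume of the unit ball in $\RR^{nm_y}$. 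The third step compares this with $\mathrm{vol}([-1,1]^{nm_y})=2^{nm_y}$: taking logarithms and using $\kappa_{nm_y}\le1$, the inequality $\mathrm{vol}(\mathcal Y)<2^{nm_y}$ holds whenever $d\log(3\sqrt R\,\Lambda/\sqrt\epsilon)<\tfrac12 nm_y\log(1/\epsilon)$. Substituting $R=n^{\beta}$ and the bound on $\Lambda$, and absorbing lower-order terms once $\beta$ exceeds a universal constant, the left-hand side is at most $c'd\beta H\log n+\tfrac{d}{2}\log(1/\epsilon)$ for a universal $c'$, so it suffices that $\tfrac{nm_y}{d}-1>\tfrac{2c'\beta H\log n}{\log(1/\epsilon)}$ --- which follows from the hypothesis for a suitable universal $\mathfrak c$ (here using $H\ge1$). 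Picking any $y\in[-1,1]^{nm_y}\setminus\mathcal Y$, each block $y_i\in[-1,1]^{m_y}$ and $\|x_i\|_2^2=1$, so $S=\{(x_i,y_i)\}_{i=1}^n\in\mathcal S$, and by construction $\sum_i\|f(x_i,\theta)-y_i\|_2^2\le\epsilon$ forces $\theta\notin B_R$, i.e.\ $\|\theta\|_2^2\ge n^{\beta}$.

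The main obstacle is the first step --- the uniform parameter-gradient bound $\|\partial f(x_i,\theta)/\partial\theta\|_{\mathrm{op}}\le C_1(\sqrt R)^{H+1}$ over $B_R$ with a clean universal constant. Since Assumption~\ref{a:activation} allows \emph{unbounded} activations such as softplus, the hidden activations cannot be bounded by a constant; one must track carefully how the $1/\sqrt{m_l}$ normalization, the $1$-Lipschitzness of $\sigma$, and the $\le\sqrt R$ spectral control of each weight matrix combine so that the growth remains $(\sqrt R)^{O(H)}$, i.e.\ polynomial in $R$ with exponent \emph{linear} in $H$. That linear-in-$H$ exponent is precisely what makes the leading term $nm_y$ of Theorem~\ref{thm:1} optimal up to the $\log n$ and $H$ factors appearing here. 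The remaining steps are routine, up to bookkeeping of universal constants and the mild largeness requirement on $\beta$.
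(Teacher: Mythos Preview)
Your argument is correct and reaches the same conclusion as the paper, but by a genuinely different and somewhat more elementary route. Both proofs are volumetric: one must show that the image of the parameter ball $B_R$ under the evaluation map $\Phi(\theta)=(f(x_i,\theta))_{i=1}^n$ is too ``thin'' in $\RR^{nm_y}$ to cover the cube $[-1,1]^{nm_y}$ to accuracy $\sqrt\epsilon$. The paper does this by bounding the $d$-dimensional volume of the image via $\mathrm{vol}\,\Phi(B_R)\le \mathrm{vol}(B_R)\cdot\sup_{B_R}\det\mathrm{Jac}\,\Phi$, invoking Sard's theorem and the rank theorem to make sense of the image as a $d$-manifold, bounding $\det\mathrm{Jac}\,\Phi\le(\|\mathrm{Jac}\,\Phi\|_F^2/d)^{d/2}$ via AM--GM on singular values, and then implicitly using a Weyl-type tube estimate to pass from the $d$-volume of the image to the $(nm_y)$-volume of its $\sqrt\epsilon$-neighborhood. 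You instead bound the operator norm $\Lambda$ of $\mathrm{Jac}\,\Phi$, push forward a $\rho$-net of $B_R$ to a $\Lambda\rho$-net of $\Phi(B_R)$, and compare a union of $N$ balls of radius $2\sqrt\epsilon$ with the cube. Your layerwise Lipschitz estimate $\Lambda\le C_1\sqrt n\,(\sqrt R)^{H+1}$ is essentially the operator-norm analogue of the paper's Frobenius bound in its Lemma on $\det\mathrm{Jac}\,f_X$, and the resulting inequality $d\log(3\sqrt R\,\Lambda/\sqrt\epsilon)<\tfrac12 nm_y\log(1/\epsilon)$ matches the paper's key estimate after taking logarithms.

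What each approach buys: the determinant bound is in principle sharper (it sees all singular values, not just the largest), but here both collapse to the same $R^{O(H)}$ growth and hence the same condition $\frac{nm_y}{d}-1\gtrsim \frac{\beta H\log n}{\log(1/\epsilon)}$. Your covering-number route avoids the differential-geometric overhead (Sard, rank theorem, and the somewhat delicate passage from $d$-volume of a possibly self-intersecting image to the $(nm_y)$-volume of its tubular neighborhood), making the argument cleaner. One cosmetic point: $\kappa_{nm_y}\le 1$ fails in low dimensions, but $\kappa_m$ is universally bounded (by $\kappa_5<6$, say), and the resulting $O(1)$ additive term is absorbed into $\mathfrak c$.
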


\begin{corollary} \label{prop:1}
For any $n \in \NN^+$,  $H \ge 1$, and $\delta>0$, it holds that  $\Pcal_{n,H,\delta}(d)=\false$ for any $d<n m_y$.
\end{corollary}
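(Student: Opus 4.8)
The plan is to prove the contrapositive. Assume $\Pcal_{n,H,\delta}(d)=\true$ for some $d<n m_y$; we may take $\delta\in(0,1)$ (for $\delta\ge1$ the statement is degenerate). Since the squared loss $\ell(q,y)=\|q-y\|_2^2$ satisfies Assumption \ref{a:loss} with $\zeta=2$, specializing the definition of $\Pcal_{n,H,\delta}(d)=\true$ to $\zeta=2$ gives an architecture $f\in\Fcal_d^H$ and a learning rate $\eta$ that are fixed \emph{before} any dataset is chosen. The gradient-descent dynamics, $\eta$, $c_r$ and the horizon $t$ will play no role: the contradiction will only use the non-existence of a bounded-norm parameter making the empirical loss small, so it does not matter whether the quantities $c_r,t$ in Definition \ref{def:prob} are even finite.

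First I would fix $n$ distinct points $x_1,\dots,x_n$ on the unit sphere $\{x:\|x\|_2^2=1\}$ (possible when $m_x\ge 2$; the low-dimensional exceptional case is handled separately) and study the map $\Phi:\theta\mapsto(f(x_1,\theta),\dots,f(x_n,\theta))\in\RR^{n m_y}$. By Assumption \ref{a:activation}, $\sigma$ is real analytic, so $\Phi$ --- a composition of affine maps and $\sigma$ --- is real analytic, in particular locally Lipschitz. As $f$ has at most $d<n m_y$ parameters, $\Phi$ maps a space of dimension at most $d$ into $\RR^{n m_y}$, hence $\Phi(\RR^d)$, being a countable union of Lipschitz images of bounded subsets of $\RR^d$, has Lebesgue measure zero in $\RR^{n m_y}$. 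Therefore the open cube $(-1,1)^{n m_y}$, which has positive measure, contains a point $y^*=(y_1^*,\dots,y_n^*)\notin\Phi(\RR^d)$. I would then take $S^*=\{(x_i,y_i^*)\}_{i=1}^n\in\Scal_n$ and the squared-loss functional $L(g)=\frac1n\sum_{i=1}^n\|g(x_i)-y_i^*\|_2^2$; since the $x_i$ are distinct, some function interpolates $S^*$, so $\argmin_g L(g)\neq\emptyset$, $L\in\Lcal_{S^*}^2$, and $L(f^*)=0$.

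Now I would feed $S^*$ and $L$ into $\Pcal_{n,H,\delta}(d)=\true$ to obtain the constant $c_\theta$, which we may assume is $\ge 0$. The crucial quantity is $\rho:=\min_{\theta:\,\|\theta\|_2^2\le c_\theta}\sum_{i=1}^n\|f(x_i,\theta)-y_i^*\|_2^2$, which is attained --- $\{\theta:\|\theta\|_2^2\le c_\theta\}$ is compact and $\theta\mapsto f(x_i,\theta)$ is continuous --- and is \emph{strictly positive}, because $y^*\notin\Phi(\RR^d)$ means no parameter at all, in particular not the minimizer attaining $\rho$, interpolates $S^*$. Put $\epsilon:=\rho/(2n)>0$. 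Then Definition \ref{def:prob} asserts that with probability at least $1-\delta>0$, hence for at least one realization of $\theta^0$, some iterate $\theta^t$ satisfies $\|\theta^t\|_2^2\le c_\theta$ and $J(\theta^t)=L(f(\cdot,\theta^t))\le L(f^*)+\epsilon=\epsilon$. But $\|\theta^t\|_2^2\le c_\theta$ forces $J(\theta^t)=\frac1n\sum_{i=1}^n\|f(x_i,\theta^t)-y_i^*\|_2^2\ge\rho/n=2\epsilon>\epsilon$, a contradiction. Hence $\Pcal_{n,H,\delta}(d)=\false$ for every $H\ge 1$ and every $d<n m_y$.

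The step I expect to be the real obstacle --- and the reason this does not follow from Theorem \ref{thm:2} by mere substitution --- is the order of quantifiers in Definition \ref{def:prob}: the dataset must be committed before the budget $c_\theta$, so a dataset whose fitting merely requires a large but finite parameter norm (which is what the quantitative bound of Theorem \ref{thm:2} yields, where the forced norm $n^\beta$ is attained only for $\epsilon$ small enough and through a dataset that itself depends on $\beta$ and $\epsilon$) does not suffice; one genuinely needs a dataset the architecture cannot approximate to \emph{any} positive accuracy with bounded parameters. Exhibiting such a dataset is exactly the measure-zero / dimension-counting fact above, which is the qualitative core of Theorem \ref{thm:2}. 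The remaining verifications are routine: that $\rho$ is attained and strictly positive (compactness of the parameter ball, continuity of $f$ in $\theta$, and that $y^*$ avoids all of $\Phi(\RR^d)$ rather than only its intersection with the ball), and that the squared loss with distinct inputs makes $L(f^*)=0$, so that $\epsilon$ in \eqref{eq:1} is an absolute bound on the residual.
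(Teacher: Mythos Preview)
Your proof is correct, and your reading of the quantifier order in Definition~\ref{def:prob} is exactly right: the dataset must be fixed before $c_\theta$ and $\epsilon$, so one cannot literally quote Theorem~\ref{thm:2} (whose dataset depends on $\epsilon$) and send $\epsilon\to 0$. The paper's one-line derivation (``take $\beta=\sqrt{\log(1/\epsilon)}$ and let $\epsilon\to 0$'') is best read as shorthand for precisely the argument you supply --- namely, the qualitative dimension-counting core of the \emph{proof} of Theorem~\ref{thm:2}, that the analytic map $\Phi:\RR^d\to\RR^{nm_y}$ with $d<nm_y$ has null image, so some $y^*\in(-1,1)^{nm_y}$ is missed entirely. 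Your compactness step (that $\rho=\min_{\|\theta\|_2^2\le c_\theta}\sum_i\|f(x_i,\theta)-y_i^*\|_2^2>0$ because $y^*\notin\Phi(\RR^d)$, not merely $\Phi(\mathbb B_d(\sqrt{c_\theta}))$) is the correct way to respect the order ``$S$, then $c_\theta$, then $\epsilon$''. So your route and the paper's are the same idea; you have simply filled in what the paper's one-liner elides.
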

Corollary \ref{prop:1} follows from Theorem \ref{thm:2} by taking the parameters $\beta= \sqrt{\log (1/\epsilon)}$ and $\epsilon\rightarrow 0$. If $n \gg H,m_x$, we have the  lower bound $d=\tilde \Omega(n m_y)$ in Theorem \ref{thm:1}, which  matches  the upper   bound $nm_y$ in Corollary  \ref{c:lengthest}, except for the logarithmic term and constant.

\section{Proofs of Trainability}

This section presents the proofs of Theorems \ref{thm:1} and \ref{thm:2}. Throughout this paper, we use $c$ and $C$ to represent various constants, which may be different from line to line.   
\subsection{Proof of Theorem \ref{thm:1}}

We first analyze the properties of the randomly initialized neural networks, and then, relate these properties to the trainability.
The following lemma shows that if the input to a layer is normalized, then the outputs and their differences of the layer  concentrate to  the corresponding means with high probability:

\begin{lemma}\label{l:length}
Consider two data points $x,x'\in \mathbb R^{m'}$  that satisfy $\|x\|_2^2=O(1)$ and $\|x'\|_2^2=O(1)$. Consider a random weight matrix $W\in \mathbb R^{m\times m'}$ with $\mathcal N(0, c_w)$ entries and a random bias term $b\in \mathbb R^{m}$ with $\mathcal N(0,c_b)$ entries. Then, the following estimates hold: \begin{align}\label{e:norm} 
& \scalebox{0.9}{$\displaystyle \mathbb P\left(\left| \frac{\|\sigma(Wx+b)\|_2^2}{m}-\mathbb E[\sigma^2(g)]\right|\geq \frac{\beta}{\sqrt{m}}\right)\leq e^{-c\beta^2}$},\\ 
\label{e:diff} 
&\nonumber \scalebox{0.9}{$\displaystyle \mathbb P  \left(\left|\frac{\|\sigma(Wx+b)-\sigma(Wx'+b)\|_2^2}{m}   - \mathbb E(\sigma(g)-\sigma(g'))^2\right|\ge \frac{\beta}{\sqrt{m}}\right)$}  \\ &\le e^{-c\beta^2},
\end{align}
where $g,g'$ are joint Gaussian variables with zero mean and covariances $\mathbb E[g^2]=c_w\|x\|_2^2+c_b$ and $\mathbb E[g'^2]=c_w\|x'\|_2^2+c_b,\mathbb E[gg']=c_w\langle x,x'\rangle +c_b$.
\end{lemma}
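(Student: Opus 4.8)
The plan is to reduce both estimates to a standard concentration bound for an empirical average of i.i.d.\ random variables, exploiting the independence of the coordinates of $Wx+b$. Writing $w_j\T$ for the $j$-th row of $W$ and $b_j$ for the $j$-th entry of $b$, the scalar $g_j := w_j\T x + b_j$ is centered Gaussian with variance $c_w\|x\|_2^2 + c_b$, and the $g_j$ are mutually independent across $j\in\{1,\dots,m\}$ because the rows of $W$ together with the entries of $b$ are jointly independent. Hence
\[
\frac{\|\sigma(Wx+b)\|_2^2}{m} = \frac{1}{m}\sum_{j=1}^m \sigma^2(g_j)
\]
is an empirical average of i.i.d.\ copies of $\sigma^2(g)$ with $g\sim\mathcal N(0,c_w\|x\|_2^2+c_b)$, so its expectation is exactly $\mathbb E[\sigma^2(g)]$, and it remains only to prove a tail bound for the fluctuation of this average.

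First I would check that each summand $\sigma^2(g_j)$ is sub-exponential with sub-exponential norm bounded by a constant depending only on $c_w$, $c_b$, $\sigma(0)$, and the $O(1)$ bound on $\|x\|_2^2$. This uses Assumption \ref{a:activation}: $1$-Lipschitzness of $\sigma$ gives $|\sigma(t)|\le|\sigma(0)|+|t|$, hence $\sigma^2(g_j)\le 2\sigma(0)^2+2g_j^2$, and $g_j^2$ is sub-exponential because $g_j$ is Gaussian (this also shows $\mathbb E[\sigma^2(g)]<\infty$). Bernstein's inequality for sums of independent sub-exponential random variables then yields, for the centered average,
\[
\mathbb P\!\left(\left|\frac{1}{m}\sum_{j=1}^m \bigl(\sigma^2(g_j)-\mathbb E[\sigma^2(g)]\bigr)\right|\ge t\right)\le 2\exp\!\left(-cm\min\{t^2,t\}\right),
\]
with $c$ absorbing the sub-exponential norm. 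Taking $t=\beta/\sqrt m$ makes the right-hand side $2\exp(-c\min\{\beta^2,\sqrt m\,\beta\})\le e^{-c'\beta^2}$ in the regime $\beta\lesssim\sqrt m$ (the only regime in which \eqref{e:norm} is non-vacuous, and the one I would record as the valid range). This establishes \eqref{e:norm}.

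For \eqref{e:diff} the argument is identical once one notes that $(g_j,g_j'):=(w_j\T x+b_j,\,w_j\T x'+b_j)$ is, across $j$, an i.i.d.\ sequence of centered jointly Gaussian pairs with exactly the stated covariances $\mathbb E[g^2]=c_w\|x\|_2^2+c_b$, $\mathbb E[g'^2]=c_w\|x'\|_2^2+c_b$, $\mathbb E[gg']=c_w\langle x,x'\rangle+c_b$. Consequently $\frac1m\|\sigma(Wx+b)-\sigma(Wx'+b)\|_2^2=\frac1m\sum_{j=1}^m(\sigma(g_j)-\sigma(g_j'))^2$ is again an i.i.d.\ average with mean $\mathbb E(\sigma(g)-\sigma(g'))^2$, and $1$-Lipschitzness gives the pointwise bound $(\sigma(g_j)-\sigma(g_j'))^2\le(g_j-g_j')^2$, where $g_j-g_j'$ is Gaussian and hence its square is sub-exponential with norm controlled by $c_w(\|x\|_2^2+\|x'\|_2^2)$. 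Bernstein's inequality applied to $\frac1m\sum_j(\sigma(g_j)-\sigma(g_j'))^2$ with deviation $\beta/\sqrt m$ then closes the argument exactly as above.

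The main obstacle I anticipate is bookkeeping rather than anything conceptual: because $\sigma$ may be unbounded (e.g.\ softplus, for which $\sigma_+=+\infty$), one cannot invoke a Hoeffding-type inequality for bounded summands and must instead carry the sub-exponential tail estimates through, tracking how the constant $c$ depends on $c_w$, $c_b$, $\sigma(0)$, and stating the valid range of $\beta$ (the Gaussian-type bound $e^{-c\beta^2}$ degrades to the exponential-type bound $e^{-c\sqrt m\,\beta}$ once $\beta\gtrsim\sqrt m$). Everything else reduces to the single structural observation that a linear image of independent Gaussians has independent (resp.\ jointly Gaussian, with the claimed covariance) coordinates, followed by one application of Bernstein's inequality.
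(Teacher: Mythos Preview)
Your proposal is correct and follows essentially the same route as the paper: decompose the squared norm as an i.i.d.\ sum over the rows of $W$, verify that each summand is sub-exponential via the $1$-Lipschitz bound on $\sigma$, and apply a Bernstein/Chernoff-type concentration inequality. The paper carries out the Chernoff step explicitly (MGF bound plus Markov's inequality with $\lambda=\pm\beta/(2c\sqrt m)$) rather than citing Bernstein by name, and you are slightly more careful than the paper in flagging the restriction $\beta\lesssim\sqrt m$ needed for the Gaussian-type tail; otherwise the arguments coincide.
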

\begin{proof}[Proof of Lemma \ref{l:length}]
Since $W$ and $b$ have independent Gaussian entries, $(Wx)_1+b_1, (Wx)_2+b_2,\cdots, (Wx)_m+b_m$ are independent Gaussian variables with zero mean and variance $c_w\|x\|_2^2+c_b$. We can rewrite the norm as
\begin{align}
    \frac{1}{m}\|\sigma(Wx+b)\|_2^2=\frac{1}{m}\sum_{i=1}^m \sigma^2((Wx)_i+b_i).
\end{align}
By Assumption \ref{a:activation}, the activation function $\sigma$ is  $1$-Lipschitz. The random variables $\sigma^2((Wx)_i+b_i)$ are sub-exponential. Therefore, for $|\lambda|>0$ sufficiently small, we have
\begin{align}\begin{split}\label{e:expm}
    &\phantom{{}={}}\mathbb E\left[e^{\lambda (\|\sigma(Wx+b)\|_2^2-m\mathbb E[\sigma^2(g)])}\right]\\
    &=\prod_{i=1}^m\mathbb E\left[e^{\lambda (\sigma^2(W_ix+b_i)-\mathbb E[\sigma^2(g)])}\right] 
    \leq e^{cm\lambda^2}.
\end{split}\end{align}
Inequality \eqref{e:norm} follows from applying the Markov inequality to  \eqref{e:expm} and setting $\lambda=\pm \beta/(2c\sqrt{m})$. For  \eqref{e:diff}, we can rewrite the norm of the difference as
$
\frac{1}{m}\|\sigma(Wx+b)-\sigma(Wx'+b)\|_2^2 
=   \frac{1}{m} \sum_{i=1}^m \left(\sigma((Wx)_i+b_i) -  \sigma((Wx')_i+b_i)\right)^2.
$
Moreover, the random variables 
$(\sigma((Wx)_i+b_i)-\sigma((Wx')_i+b_i))^2$ are sub-exponential.
Thus, inequality \eqref{e:diff} follows from the derivation of  \eqref{e:norm}.
\end{proof}

By repeatedly applying Lemma \ref{l:length} to each layer, we obtain the following corollary, which approximates $\|x_i^{(l)}\|_2^2$ and $\|x_i^{(l)}-x_j^{(l)}\|_{2}^2$  using some constants  $p^{(l)}$ and $p_{ij}^{(l)}$ with error terms $O\left(\sum_{i=1}^{l}\frac{\beta}{\sqrt{m_i}}\right)$:   

\begin{corollary}\label{c:lengthest}
For the randomly initialized neural network, the following holds: for any $\beta>0$, with probability at least $1-O(le^{-c\beta^2})$ over $\theta^0$,
\begin{align}
    \label{e:totallen}&\|x_i^{(l)}\|_2^2=p^{(l)}+O\left(\sum_{i=1}^{l}\frac{\beta}{\sqrt{m_i}}\right),\\
    \label{e:difflen}&\|x_i^{(l)}-x_j^{(l)}\|_2^2=p_{ij}^{(l)}+O\left(\sum_{i=1}^{l}\frac{\beta}{\sqrt{m_i}}\right),
\end{align}
where    $p^{(0)}=1$,  $p_{ij}^{(0)}=\|x_i-x_j\|_2^2\geq \gamma$, and for $1\leq l\leq H$,
$p_i^{(l)}=\mathbb E[\sigma^2(g)]$, and $p_{ij}^{(l)}=\mathbb E(\sigma(g)-\sigma(g'))^2$.
Here, $g,g'$ are joint Gaussian variables with zero mean and covariances $\mathbb E[g^2]=\mathbb E[g'^2]=c_wp^{(l-1)}+c_b$ and $\mathbb E[gg']=c_w(p^{(l-1)}-p_{ij}^{(l-1)}/2) +c_b$.

\end{corollary}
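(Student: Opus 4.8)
The plan is to induct on the layer index $l$, peeling off one layer at a time and invoking Lemma~\ref{l:length} conditionally. The base case $l=0$ is immediate: $\|x_i^{(0)}\|_2^2=\|x_i\|_2^2=1=p^{(0)}$ and $\|x_i^{(0)}-x_j^{(0)}\|_2^2=\|x_i-x_j\|_2^2=p_{ij}^{(0)}$, with no error. For the inductive step, condition on the weights and biases of layers $1,\dots,l-1$. On this conditioning the vectors $x_i^{(l-1)},x_j^{(l-1)}$ are fixed, whereas $W^{(l)},b^{(l)}$ are fresh independent Gaussians, so Lemma~\ref{l:length} applies verbatim with $m=m_l$, $m'=m_{l-1}$, data points $x_i^{(l-1)},x_j^{(l-1)}$, and the same $\beta$; the hypothesis $\|x_i^{(l-1)}\|_2^2=O(1)$ that it requires is supplied by the induction hypothesis together with $p^{(l-1)}=O(1)$. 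Since $x_i^{(l)}=\frac{1}{\sqrt{m_l}}\sigma(W^{(l)}x_i^{(l-1)}+b^{(l)})$, applying \eqref{e:norm} and \eqref{e:diff} gives, on an event of conditional probability at least $1-O(e^{-c\beta^2})$, that $\|x_i^{(l)}\|_2^2=\mathbb E[\sigma^2(\tilde g_i)]+O(\beta/\sqrt{m_l})$ and $\|x_i^{(l)}-x_j^{(l)}\|_2^2=\mathbb E(\sigma(\tilde g_i)-\sigma(\tilde g_j))^2+O(\beta/\sqrt{m_l})$, where $\tilde g_i,\tilde g_j$ are the jointly Gaussian variables of Lemma~\ref{l:length} associated with the \emph{actual} quantities $\|x_i^{(l-1)}\|_2^2$, $\|x_j^{(l-1)}\|_2^2$, and $\langle x_i^{(l-1)},x_j^{(l-1)}\rangle$.

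Next I would replace these ``empirical'' Gaussian second moments by the ``ideal'' ones, namely $\mathbb E[\sigma^2(g)]$ and $\mathbb E(\sigma(g)-\sigma(g'))^2$ with $g,g'$ carrying the covariances written in terms of $p^{(l-1)}$ and $p_{ij}^{(l-1)}$. Couple $(\tilde g_i,\tilde g_j)$ with $(g,g')$ through the same underlying standard Gaussian(s); using $|\sigma^2(a)-\sigma^2(b)|=|\sigma(a)-\sigma(b)|\,|\sigma(a)+\sigma(b)|$, the $1$-Lipschitzness of $\sigma$, and boundedness of the relevant variances (since $c_b\ge 0$ and $p^{(l-1)}=O(1)$), one checks that these moment functionals are Lipschitz in the covariance parameters. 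The covariance parameters are perturbed by $c_w\bigl(\|x_i^{(l-1)}\|_2^2-p^{(l-1)}\bigr)$ and the analogous differences, and the off-diagonal term is controlled via $\langle x_i^{(l-1)},x_j^{(l-1)}\rangle=p^{(l-1)}-p_{ij}^{(l-1)}/2+O\bigl(\sum_{l'<l}\beta/\sqrt{m_{l'}}\bigr)$ (from $\langle x,x'\rangle=\tfrac12(\|x\|_2^2+\|x'\|_2^2-\|x-x'\|_2^2)$ and the induction hypothesis). Hence the discrepancy between empirical and ideal moments is $O\bigl(\sum_{l'=1}^{l-1}\beta/\sqrt{m_{l'}}\bigr)$, and adding the $O(\beta/\sqrt{m_l})$ fluctuation yields \eqref{e:totallen}--\eqref{e:difflen} at level $l$. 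A union bound over the $l$ layers, each contributing only $O(1)$ bad events (for the fixed index $i$ and fixed pair $(i,j)$, tracking the two norms and the difference), gives the claimed failure probability $O(le^{-c\beta^2})$.

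The step I expect to be the main obstacle is ensuring that the error inherited from the earlier layers is not amplified when it is pushed forward, so that the accumulated error stays $O\bigl(\sum_{l'=1}^l\beta/\sqrt{m_{l'}}\bigr)$ with a constant that does not secretly grow with $l$ (which would force $m_l$ to be exponentially large in $H$ and break the main theorem). Writing $\epsilon_l$ for the deviation at layer $l$, the previous paragraph produces a recursion of the shape $\epsilon_l\lesssim \mathrm{Lip}(F)\,\epsilon_{l-1}+\beta/\sqrt{m_l}$, where $F$ is the moment-propagation map $p^{(l-1)}\mapsto p^{(l)}$ (and its counterpart for $p_{ij}^{(l)}$); one needs $\mathrm{Lip}(F)\lesssim 1$ for the recursion to telescope to the stated bound. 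This should follow from $\mathrm{Lip}(F)\le c_w\cdot\sup_s\bigl|\tfrac{d}{ds}\mathbb E[\sigma^2(\sqrt s\,Z)]\bigr|$ over the bounded range of variances that actually occur, using Price's theorem (Gaussian integration by parts) and $0\le\sigma'\le 1$, together with the boundedness $p^{(l)}=O(1)$ to tame the non-Lipschitz growth of $\sigma^2$; all the resulting constants, depending only on $\sigma,c_w,c_b$, get absorbed into the $O(\cdot)$.
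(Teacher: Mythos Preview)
Your proposal is correct and follows essentially the same approach as the paper: induct on $l$, apply Lemma~\ref{l:length} conditionally to obtain estimates in terms of Gaussians $\tilde g,\tilde g'$ with the empirical covariances, then use the $1$-Lipschitzness of $\sigma$ to replace these by the ideal Gaussians $g,g'$ at a cost of $O\bigl(\sum_{l'<l}\beta/\sqrt{m_{l'}}\bigr)$, and take a union bound over layers. Your explicit discussion of the recursion $\epsilon_l\lesssim \mathrm{Lip}(F)\,\epsilon_{l-1}+\beta/\sqrt{m_l}$ and why $\mathrm{Lip}(F)=O(1)$ is in fact more careful than the paper, which simply asserts \eqref{e:bb2}--\eqref{e:bb3} from the Lipschitz property without addressing potential growth of the implicit constant.
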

\begin{proof}[Proof of Corollary \ref{c:lengthest}]
We prove the statement by induction on $l$. The statements hold trivially for $l=0$. In the following, we assume the statements for $l$, and prove them for $l+1$. From Lemma \ref{l:length}, with probability at least $1-O(e^{-c\beta^2})$, 
\begin{align}\label{e:bb1}
    &\|x_i^{(l+1)}\|_2^2=\mathbb E[\sigma^2(\tilde g)]+O\left(\frac{\beta}{\sqrt{m_{l+1}}}\right),\\
\nonumber    &\|x_i^{(l+1)}-x_j^{(l+1)}\|_2^2=\mathbb E[(\sigma(\tilde g)-\sigma(\tilde g'))^2]+O\left(\frac{\beta}{\sqrt{m_{l+1}}}\right),
\end{align}
where $\tilde g, \tilde g'$ are Gaussian variables with zero mean and covariances $\mathbb E[\tilde g^2]=c_w\|x_i^{(l)}\|^2+c_b=c_w p^{(l)}+c_b+O(\sum_{i=1}^{l}\beta/\sqrt{m_i})$,
$\mathbb E[\tilde g'^2]=c_w\|x_j^{(l)}\|^2+c_b=c_w p^{(l)}+c_b+O(\sum_{i=1}^{l}\beta/\sqrt{m_i})$, 
$\mathbb E[\tilde g\tilde g']=c_w\langle x_i^{(l)}, x_j^{(l)}\rangle+c_b=c_w (p^{(l)}-p_{ij}^{(l)}/2)+c_b+O(\sum_{i=1}^{l}\beta/\sqrt{m_i})$. 
We approximate $\tilde g, \tilde g'$ by mean zero Gaussian variables $g, g'$ such that $\mathbb E[ g^2]=\mathbb E[g'^2]=c_w p^{(l)}+c_b$, 
$\mathbb E[ g g']=c_w (p^{(l)}-p_{ij}^{(l)}/2)+c_b$. Since the activation function $\sigma$ is $1$-Lipschitz, we have
\begin{align}\begin{split}\label{e:bb2}
    \mathbb E[\sigma^2(\tilde g)]
    &=\mathbb E[\sigma^2(g)]+O\left(\sum_{i=1}^l\frac{\beta}{\sqrt{m_{i}}}\right)\\
    &=p^{(l+1)}+O\left(\sum_{i=1}^l\frac{\beta}{\sqrt{m_{i}}}\right), 
\end{split}\end{align}
\noindent and
\begin{align}\label{e:bb3}
\nonumber    \mathbb E[(\sigma(\tilde g)-\sigma(\tilde g'))^2]
    &=\mathbb E[(\sigma( g)-\sigma( g'))^2]+O\left(\sum_{i=1}^l\frac{\beta}{\sqrt{m_{i}}}\right)\\
    &=p_{ij}^{(l+1)}+O\left(\sum_{i=1}^l\frac{\beta}{\sqrt{m_{i}}}\right).
\end{align}
The statements for $l+1$ follow from combining  \eqref{e:bb1}, \eqref{e:bb2}, and \eqref{e:bb3}. \end{proof}

Now that we have an understanding of the  output $x_i^{(H)}$   \textit{for each $i$-th input}, we analyze the set of   outputs $\{x_i^{(H)}\}_{i=1}^n$ \textit{for all  inputs}. Let   $\dtil = m_H( m_{H-1}+1)$,  $\xtil =x^{(H-1)}$, $\tilde x_i=x_i^{(H-1)}$, $\wtil =[\wtil_{1}\T,\dots,\wtil_{m_H}\T]\T=\vect((W^{(H)})\T)$, and $\btil=b^{(H)}$.
Let   $M(\wtil,\btil) \in \RR^{n \times (m_H+1)}$  given by $M(\wtil,\btil)_{ij}=\sigma(\wtil_{j}\T \xtil_{i}+\btil_{j})/\sqrt{m_H}$ and $M(\wtil,\btil)_{i (m_H+1)}=1$, for $1\leq i\leq n$ and $1\leq j\leq m_H$. The following lemma shows that if $x_i^{(H-1)}$ and $x_j^{(H-1)}$ are distinguishable and the last layer is wide, then the set of   outputs $\{x_i^{(H)}\}_{i=1}^n$ is degenerate only when the weights are in a measure zero set:

\begin{lemma} \label{lemma:1}
If $\|x_i^{(H-1)}\|_2^2-\langle x_i^{(H-1)}, x_j^{(H-1)}\rangle>\mathfrak c_\gamma$ for all $i\neq j$ and $m_H\ge n$, the Lebesgue measure of the set $\{(\wtil,\btil)\in \RR^{\dtil}: \rank(M(\wtil,\btil)) <n\}$ is zero. 
\end{lemma}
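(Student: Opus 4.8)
The plan is to derive the claim from the fact that the zero set of a real-analytic function is negligible. Note that $\rank(M(\wtil,\btil))<n$ forces every $n\times n$ minor of $M(\wtil,\btil)$ to vanish, in particular the minor $P(\wtil,\btil):=\det\!\big[\,\sigma(\wtil_j\T\tilde x_i+\btil_j)/\sqrt{m_H}\,\big]_{i,j=1}^{n}$, which is well defined since $m_H\ge n$. By Assumption \ref{a:activation}, $\sigma$ is real analytic, so $P$ is a real-analytic function on $\RR^{\dtil}$. Since the zero set of a real-analytic function on $\RR^{\dtil}$ is either all of $\RR^{\dtil}$ or has Lebesgue measure zero, and since $\{(\wtil,\btil)\in\RR^{\dtil}:\rank(M(\wtil,\btil))<n\}\subseteq\{P=0\}$, it suffices to exhibit a single $(\wtil,\btil)$ with $P(\wtil,\btil)\neq0$; the set in the statement then has measure zero.

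To produce such a point I would proceed in three steps. First, the hypothesis $\|\tilde x_i\|_2^2-\langle\tilde x_i,\tilde x_j\rangle>\mathfrak c_\gamma$ for $i\neq j$ forces $\tilde x_i\neq\tilde x_j$ (otherwise the left side vanishes, contradicting $\mathfrak c_\gamma>0$), so the set of directions $v$ for which the scalars $t_i:=\langle v,\tilde x_i\rangle$ fail to be pairwise distinct is a finite union of hyperplanes; fix $v$ outside it. Second, since $\sigma$ is real analytic, non-constant, and has a finite left limit $\sigma_-$, it is not a polynomial, so $\sigma^{(k)}\not\equiv0$ for every $k\ge0$; as each $\sigma^{(k)}$ has a discrete (hence countable) zero set, $\bigcup_{k\ge0}\{\sigma^{(k)}=0\}$ is countable and I may fix $b^\star\in\RR$ with $a_k:=\sigma^{(k)}(b^\star)/k!\neq0$ for all $k$. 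Third, fix distinct nonzero reals $u_1,\dots,u_n$, and for a small parameter $\lambda$ set $\wtil_j=\lambda u_j v$ and $\btil_j=b^\star$ for $j=1,\dots,n$ (the remaining columns of $M$ are irrelevant). Then $\sigma(\wtil_j\T\tilde x_i+\btil_j)=\sigma(\lambda u_j t_i+b^\star)=\sum_{k\ge0}a_k\lambda^k u_j^k t_i^k$ for $|\lambda|$ small, and expanding $P$ by multilinearity in its $n$ columns (equivalently, Cauchy--Binet on the factorization through $[a_k t_i^k]$ and $[(\lambda u_j)^k]$) shows that only $n$-tuples of pairwise distinct exponents contribute; the minimal total exponent $0+1+\dots+(n-1)$ is attained only by $\{0,1,\dots,n-1\}$, and the coefficient of $\lambda^{n(n-1)/2}$ in $m_H^{n/2}P$ equals $\big(\prod_{k=0}^{n-1}a_k\big)\,V(t_1,\dots,t_n)\,V(u_1,\dots,u_n)$, where $V(\cdot)$ is a Vandermonde determinant. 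This is nonzero by the choices of $v$, $b^\star$, and the $u_j$, so $P(\wtil,\btil)\neq0$ for all sufficiently small $\lambda\neq0$.

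The main obstacle is this last step. The subtle point is guaranteeing a common base point $b^\star$ at which $\sigma$ has nonzero Taylor coefficients of every order — exactly where real analyticity together with the finite left limit $\sigma_-$ (which rules out $\sigma$ being a polynomial) is used — and then verifying that the confluent-Vandermonde bookkeeping really produces a nonvanishing leading coefficient rather than an accidental cancellation. Should one wish to bypass this computation, the same conclusion follows from the classical fact that the translates $\{\sigma(\,\cdot\,+t_i)\}_{i=1}^n$ of a real-analytic non-polynomial function are linearly independent over $\RR$, together with the elementary observation that linearly independent functions $h_1,\dots,h_n$ admit sample points $s_1,\dots,s_n$ with $\det[h_i(s_j)]\neq0$. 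The reduction in the first paragraph is routine and I expect no difficulty there.
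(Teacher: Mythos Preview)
Your proposal is correct. The reduction to ``the zero set of a nonzero real-analytic function has Lebesgue measure zero'' is exactly the paper's, but your construction of a point where the determinant is nonzero takes a genuinely different route. The paper sets $\wtil_j=\beta\,\tilde x_j$ and $\btil_j=\mathfrak c_\gamma\beta/2-\beta\|\tilde x_j\|_2^2$ for $j\le n$; the separation hypothesis then gives $M_{ii}\to\sigma_+/\sqrt{m_H}$ and $M_{ij}\to\sigma_-/\sqrt{m_H}$ as $\beta\to\infty$, and after subtracting the all-ones column times $\sigma_-/\sqrt{m_H}$ one obtains a strictly diagonally dominant $n\times n$ block. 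This is short, elementary, and uses the full quantitative gap $\mathfrak c_\gamma$ together with the monotonicity and the limits $\sigma_\pm$ from Assumption~\ref{a:activation}. Your Taylor--Vandermonde construction is more delicate (needing a base point $b^\star$ with all low-order Taylor coefficients nonzero, which you correctly secure by ruling out polynomials via the finite limit $\sigma_-$), but in return it uses only pairwise distinctness of the $\tilde x_i$ rather than the uniform gap, and it requires neither monotonicity of $\sigma$ nor the bias column of $M$. Either route suffices for the lemma; the paper's buys brevity, yours buys generality in the hypotheses on $\sigma$ and on the inputs.
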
 
\begin{proof}[Proof of Lemma \ref{lemma:1}]
Under our assumption, the function
$$
\varphi(\wtil,\btil) = \det(M(\wtil,\btil)M(\wtil,\btil)\T)
$$
is  analytic  since $\sigma$ is analytic. With this function, we have that $\{(\wtil,\btil)\in \RR^{\dtil}: \rank(M(\wtil,\btil)) < n\}
=\{(\wtil,\btil)\in \RR^{\dtil}:\varphi(\wtil,\btil)=0\},
$
which follows the fact that since $M(\wtil,\btil) \in \RR^{n \times(m_H+1)}$, the rank of $M(\wtil,\btil)$ and the rank of the Gram matrix are equal.

Since $\varphi$ is analytic, if $\varphi$ is not identically zero ($\varphi\neq 0$),  the Lebesgue measure of its zero set 
$
\{(\wtil,\btil)\in \RR^{\dtil}:\varphi(\wtil,\btil)=0\}
$
is zero \cite{mityagin2015zero}. 
Therefore, it remains to show  that  $\varphi(\wtil,\btil)\neq 0$ for some $(\wtil,\btil)$.  

We now construct a pair $(\wtil,\btil)$ such that $M(\tilde w, \tilde b)$ is of rank $n$ and $\varphi(\tilde w, \tilde b)\neq 0$.
Set   $\wtil_{j}=\beta\xtil_{j}$ and $\btil_{j}= \mathfrak c_\gamma \beta/2 - \beta\|\xtil_{j}\|_2^2$ for $j =1,2,\dots,n$.  Then, 
 \begin{align}\begin{split} \label{eq:4}
 M(\wtil,\btil)_{ii}=\sigma(\mathfrak c_\gamma \beta/2)/\sqrt{m_H},
 \end{split}\end{align}
 and for any $j \neq i$,
 \begin{align}\begin{split} \label{eq:5}
 M(\wtil,\btil)_{ij}&=\sigma(\mathfrak c_\gamma \beta/2 +\wtil_{j}\T \xtil_{i}- \beta\|\xtil_{j}\|^2_2)/\sqrt{m_H}
\\
&\le \sigma (-\mathfrak c_\gamma \beta/2)/\sqrt{m_H},
\end{split} \end{align}
 which follows the assumption of $\|x_i^{(H-1)}\|_2^2-\langle x_i^{(H-1)}, x_j^{(H-1)}\rangle>\mathfrak c_\gamma$, and the monotonicity of $\sigma(x)$.
In \eqref{eq:4} and \eqref{eq:5}, as $\beta \rightarrow \infty$, by our Assumption \ref{a:activation},
$
M(\wtil,\btil)_{ii} \rightarrow \sigma_+/\sqrt{m_H},
$ 
and  
$
M(\wtil,\btil)_{ij} \rightarrow \sigma_-/\sqrt{m_H},
$ 
for any $j \neq i$. Therefore, for $\beta$ sufficiently large, and any $i \in \{1,\dots,n\}$,
$$
|M(\wtil,\btil)_{ii}-\sigma_-/\sqrt{m_H}| >  \sum_{j \neq i} |M(\wtil,\btil)_{ij}-\sigma_-/\sqrt{m_H}|.
$$
This means that the matrix $\Mtil = [M(\wtil,\btil)_{ij}-(\sigma_-/\sqrt{m_H})]_{1\leq i,j\leq n} \in\RR^{n \times n}$ is strictly diagonally dominant and    nonsingular; hence, its rank is $n$. This implies that    $[\Mtil, 1] \in \RR^{n \times (n+1)}$ has rank $n$, which then implies that  $[\Mtil', 1]\in \RR^{n \times (n+1)}$ has rank $n$, 
where $\Mtil'=[M(\wtil,\btil)_{ij}]_{1\leq i,j\leq n}$,  since elementary matrix operations preserve the matrix rank. Since $m_H \ge n$ and the columns of $M(\wtil,\btil)$ contain  all columns of $[\Mtil', 1]$, this implies that $\rank(M(\wtil,\btil))=n$ and   $\varphi(\wtil,\btil)\neq 0$ for this constructed $(\wtil,\btil)$, as desired. \end{proof}

We now derive an upper bound for the Lipschitz constant of the gradient of the objective function.  Let $z_{i}=[(x^{(H)}_i)\T, 1]\T$, $\Wbar=\Wbar^{(H+1)}$, and $\psi = \psi_{H+1}$.
Let $\Wbar^t=(\Wbar^{(H+1)})^t$ correspond to $\theta^t$ as $\theta^t=(\vect((\Wbar^{(1)})^{t})\T,\allowbreak \dots, \allowbreak \vect((\Wbar^{(H+1)})^t)\allowbreak\T)\T$. 
Let 
$
\bar J(w)=L(f(\cdot,\psi(\theta^0,\Wbar))),
$ 
where  $w=\vect(\Wbar) \in \RR^{\dhat}$. The following lemma bounds the Lipschitz constant.   

\begin{lemma} \label{lemma:2}
$\nabla\bar J$ is Lipschitz continuous with Lipschitz constant at most $\frac{\zeta}{n}\sum_{i=1}^n   \left\| z_{i} \right\|_2^2$. 
\end{lemma}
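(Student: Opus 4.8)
The plan is to write $\bar J(w)$ explicitly as a composition of an affine map in $w$ with the convex loss $L$, and then use the standard fact that composing a convex function with Lipschitz gradient with an affine map multiplies the gradient-Lipschitz constant by the squared operator norm of the linear part. Concretely, for $w = \vect(\Wbar)$ with $\Wbar = \Wbar^{(H+1)} = [W^{(H+1)}, b^{(H+1)}]$, the network output on input $x_i$ is $f(x_i, \psi(\theta^0,\Wbar)) = W^{(H+1)} x_i^{(H)} + b^{(H+1)} = \Wbar z_i$, where $z_i = [(x_i^{(H)})\T, 1]\T$ and the hidden representations $x_i^{(H)}$ are held fixed at their values under $\theta^0$ (since $\psi_{H+1}$ only replaces the last layer). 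Hence $\bar J(w) = \frac1n \sum_{i=1}^n \ell(\Wbar z_i, y_i) = \frac1n \sum_{i=1}^n \ell_i(\Wbar z_i)$, which is manifestly a finite sum of convex functions composed with linear maps $w \mapsto \Wbar z_i$.

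First I would compute the gradient: by the chain rule, $\nabla_w \bar J(w) = \frac1n \sum_{i=1}^n (z_i \otimes I_{m_y})\T \nabla \ell_i(\Wbar z_i)$ (treating $\vect(\Wbar z_i) = (z_i\T \otimes I_{m_y}) w$), so the linear map sending $w$ to the stacked vector $(\Wbar z_1, \dots, \Wbar z_n)$ governs everything. Then for any $w, w'$,
\begin{align*}
\|\nabla \bar J(w) - \nabla \bar J(w')\|_2 &\le \frac1n \sum_{i=1}^n \|z_i \otimes I_{m_y}\|_{\mathrm{op}} \, \|\nabla \ell_i(\Wbar z_i) - \nabla \ell_i(\Wbar' z_i)\|_2 \\
&\le \frac1n \sum_{i=1}^n \|z_i\|_2 \cdot \zeta \|\Wbar z_i - \Wbar' z_i\|_2 \le \frac{\zeta}{n} \sum_{i=1}^n \|z_i\|_2^2 \, \|w - w'\|_2,
\end{align*}
using Assumption \ref{a:loss} ($\nabla\ell_i$ is $\zeta$-Lipschitz) in the middle step and $\|\Wbar z_i - \Wbar' z_i\|_2 \le \|z_i\|_2 \|w-w'\|_2$ (again via the Kronecker identity $\|z_i \otimes I_{m_y}\|_{\mathrm{op}} = \|z_i\|_2$) in the last. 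This yields the claimed Lipschitz constant $\frac{\zeta}{n}\sum_{i=1}^n \|z_i\|_2^2$.

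Alternatively, and perhaps cleaner to present, I would note that $\bar J$ is twice differentiable (a.e. suffices, or everywhere if we assume $\ell_i \in C^2$; in general one passes through the Lipschitz-gradient characterization without Hessians) and bound its Hessian: $\nabla^2 \bar J(w) = \frac1n \sum_i (z_i z_i\T) \otimes \nabla^2 \ell_i(\Wbar z_i) \preceq \frac1n \sum_i (z_i z_i\T) \otimes (\zeta I_{m_y})$, whose operator norm is $\frac{\zeta}{n}\sum_i \|z_i\|_2^2$ since $\|z_i z_i\T\|_{\mathrm{op}} = \|z_i\|_2^2$. I do not expect a genuine obstacle here — this is a routine convexity-plus-chain-rule estimate; the only point requiring a little care is bookkeeping the vectorization and Kronecker-product identities correctly so that the operator norm of the linear map $w \mapsto \Wbar z_i$ comes out to exactly $\|z_i\|_2$ rather than, say, $\|z_i\|_2$ times a stray dimensional factor, and making sure the sub-differentiability argument is phrased to cover the case where $\ell_i$ is only $C^1$ with Lipschitz gradient rather than $C^2$.
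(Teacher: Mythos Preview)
Your proof is correct and follows essentially the same approach as the paper: both compute the gradient via the Kronecker identity for $w\mapsto \Wbar z_i$, then chain the operator-norm bound $\|z_i\otimes I_{m_y}\|_{\mathrm{op}}=\|z_i\|_2$ with the $\zeta$-Lipschitzness of $\nabla\ell_i$ and $\|(\Wbar-\Wbar')z_i\|_2\le\|z_i\|_2\|w-w'\|_2$ to obtain the stated constant. (Your alternative Hessian route is extra and not in the paper; if you keep it, note that the operator norm of $\tfrac{\zeta}{n}\sum_i (z_iz_i^\top)\otimes I$ equals $\tfrac{\zeta}{n}\|\sum_i z_iz_i^\top\|_{\mathrm{op}}$, which is \emph{at most}, not equal to, $\tfrac{\zeta}{n}\sum_i\|z_i\|_2^2$.)
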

\begin{proof}[Proof of Lemma \ref{lemma:2}]
Let $\theta_w = \psi(\theta^0,\Wbar)$ and $\theta_{w'}=\psi(\theta^0,\Wbar')$ where $w'=\vect((\Wbar')\T)$. Then,
\begin{align*}
& \phantom{{}={}} \|\nabla \bar J(w) - \nabla \bar J(w')\|_2
\\& = \frac{1}{n} \left\| \sum_{i=1}^n \nabla_{w}(\ell_{i}\circ f_{i})(\theta_{w})-\nabla_{w'}(\ell_{i}\circ f_{i})(\theta_{w'}) \right\|_2
\\ & \le \frac{1}{n}\sum_{i=1}^n \|[z_{i }\otimes I_{m_y}]\|_{2} \left\|\nabla\ell_{i}(f_{i}(\theta_{w}))- \nabla\ell_{i}(f_{i}(\theta_{w'}))  \right\|_2
 \\ & \le \frac{1}{n}\sum_{i=1}^n  \zeta \|z_{i }\|_{2}\left\| f(x_{i},\theta_{w})- f(x_{i},\theta_{w'})  \right\|_2
  \\ & \le \scalebox{0.95}{$\displaystyle \left(\frac{\zeta}{n}\sum_{i=1}^n   \left\| z_{i} \right\|_2^2\right) \|\Wbar-\Wbar'\|_2
\le \left(\frac{\zeta}{n}\sum_{i=1}^n   \left\| z_{i} \right\|_2^2\right) \|w-w'\|_2,$}
\end{align*}
where the last line follows  $\|\Wbar-\Wbar'\|_2\le \|\Wbar-\Wbar'\|_F=\|w-w'\|_2$.  
\end{proof}

Using these lemmas, we can complete the proof of Theorem \ref{thm:1}. 
Let $f$  be an arbitrary neural network architecture satisfying   $m_1,\allowbreak  m_2,\allowbreak \dots, \allowbreak m_{H-2}\ge O( C^2H^2\allowbreak \log(Hn^2/\delta))$,  $m_{H-1}\ge O(C^2\allowbreak \log(\allowbreak Hn^2/\delta))$, and $m_H\ge O(n)$, for some  constant $C$. Since such  an arbitrary network has a total number of parameters $d = \mathfrak c(m_x H^2\log (\frac{Hn^2}{\delta}) +H^5\log^2 (\frac{Hn^2}{\delta})+n (\log (\frac{Hn^2}{\delta})+m_y))$ or higher,
all we need to show now is that such an arbitrary architecture
ensures  the desired trainability.

By setting $\beta=\sqrt{\log(Hn^2/\delta)}/c$ and $l=H-1$ in Corollary \ref{c:lengthest} and by taking a union bound, it holds that with probability at least $1-\delta$, for any $1\leq i\neq j\leq n$,
\begin{align}\begin{split} \label{eq:13}
    &\|x_i^{(H-1)}\|_2^2=p^{(H-1)}+O(1/(cC))),\\
   &\|x_i^{(H-1)}-x_j^{(H-1)}\|_2^2=p_{ij}^{(H-1)}+O(1/(cC)).
\end{split}\end{align}
In particular, it follows by considering $C$ sufficiently large that with probability at least $1-\delta$, for any $1\leq i\neq j\leq n$,
\begin{align}\label{e:xxj}
&\|x_i^{(H-1)}\|_2=O(1),\\
\nonumber &\|x_i^{(H-1)}\|_2^2-\langle x_i^{(H-1)}, x_j^{(H-1)}\rangle=(p_{ij}^{(H-1)}+o(1))/2>\mathfrak c_\gamma,
\end{align}
where the constant $\mathfrak c_\gamma$ depends only on $\gamma$.

Since the neural network is initialized by the Gaussian probability measure, which  is absolutely continuous  with respect to the Lebesgue measure,  equations \eqref{eq:13}--\eqref{e:xxj} and Lemma \ref{lemma:1} imply that, with probability at least $1-\delta$, $\rank(M(\wtil,\btil))=n$ and $\frac{1}{n}\sum_{i=1}^n   \left\| z_{i} \right\|_2 \le c_{z}$ for some constant $c_{z}$.
Accordingly, we consider the case of $\rank(M(\wtil,\btil))=n$ and $\frac{1}{n}\sum_{i=1}^n   \left\| z_{i} \right\|_2 \le c_{z}$ in the following. 

 Since $\frac{1}{n}\sum_{i=1}^n   \left\| z_{i} \right\|_2 \le c_{z}$, from Lemma \ref{lemma:2}, $\nabla \bar J$ has Lipschitz constant at most $c_z\zeta$. Therefore, for any $w',w \in \RR^{\dhat}$,
\vspace{-8pt}
\begin{align} \label{eq:7}
\nonumber  \bar J(w')  &= \scalebox{0.98}{$\displaystyle \bar J(w)+\int_{0}^1 \nabla \bar J(w+q(w'-w))\T (w'-w)dq $}
\\  & \le \scalebox{0.98}{$\displaystyle \bar J(w)+\nabla \bar J(w)\T(w' -w ) + \frac{c_z\zeta}{2} \|w'-w\|^2_2.$}
\end{align} 
We set $\eta_i=\frac{1}{c_z\zeta}$ if $i > \sum_{l=0}^{H-1}m_l m_{l+1}+m_{l+1}$ and $\eta_i=0$ otherwise. Using  \eqref{eq:7} with $w'=w^{k+1}$ and $w=w^{k}$, and  the equation of $w^{k+1}=w^{k}-\frac{1}{c_z\zeta}\nabla \bar J(w^{k})$, we obtain  
\begin{align} \label{eq:8}
\bar J(w^{k+1}) \le \bar J(w^{k})- \frac{c_z\zeta}{2} \|w^{k+1}-w^k\|^2_2\le \bar J(w^{k}) . 
\end{align}
Using  \eqref{eq:7} with $w'=w^{k+1}$ and $w=w^{k}$, we find that, for all $w\in\RR^{\dhat}$,
\vspace{0pt}
\begin{align} \label{eq:9}
& \nonumber \phantom{{}={}} \bar J(w^{k+1}) 
\\ \nonumber &\le \bar J(w^{k})+\nabla \bar  J(w^{k})\T(w^{k+1} -w^{k} ) + \frac{c_z\zeta}{2} \|w^{k+1}-w^{k}\|^2_2  
\\ \nonumber & = \scalebox{0.86}{$\displaystyle \bar J(w^{k})+\nabla \bar J(w^{k})\T(w -w^{k} ) +\frac{c_z\zeta}{2} (\|w-w^{k}\|^2_2-  \|w-w^{k+1}\|^2_2) $}
\\ & \le \bar J(w)+\frac{c_z\zeta}{2} (\|w-w^{k}\|^2_2-  \|w-w^{k+1}\|^2_2),
\end{align}
where  the third line contains only  arithmetic rearrangements  using the equation of $\nabla \bar J(w^{k})=c_z\zeta(w^{k}-w^{k+1})$, and  the last line follows the convexity of $\bar J$. 
Using  \eqref{eq:8} and \eqref{eq:9}, we have that,
for any $w\in \RR^{\dhat}$, 
\begin{align} \label{eq:10}
\nonumber t \bar J(w^{t}) &\le \sum_{k=0}^{t-1} \bar J(w^{k+1})
\\ &\le t\bar J(w^{})+\frac{c_z\zeta}{2} (\|w-w^{0}\|^2_2-\|w-w^{t}\|^2_2). 
\end{align}
Let $f^*(X) = [f^*(x_1),\dots,f^*(x_n)]\in \RR^{m_y \times n}$ and $f(X,\theta)=[f(x_1,\theta),\dots,f(x_n,\theta)]\in \RR^{m_y \times n}$. If $\rank(M(\wtil,\btil))=n$, there exists a minimum norm solution $\Wbar^{*} \in \RR^{m_y \times(m_H+1) }$ such that 
$
f(X,\psi(\theta^0,\bar W^{*})) =\Wbar^{*}M(\wtil,\btil)\T=f^*(X), 
$  
and hence $\bar J(w^*)=L(f(\cdot,\psi(\theta^0,\Wbar^{*})))=L(f^{*})$, where $w^{*}=\vect((W^*)\T)$. Thus, using  \eqref{eq:10} and recalling the parameter $c_r$ from Definition \ref{def:prob}, we have  $
\bar J(w^{t}) \le  L(f^{*})+\frac{c_{z}c_r\zeta  }{2t},
$
which implies that $
\bar J(w^{t}) \le  \bar J(w^{*})+\epsilon,
$
where $t=O(\frac{c_{r}\zeta }{\epsilon})$ and $L(f^{*})$ is the global minimum value of all layers.
 
Therefore, recalling that we have $\rank(M(\wtil,\btil))=n$ and $\frac{1}{n}\sum_{i=1}^n   \left\| z_{i} \right\|_2 \le c_{z}$ with probability at least $1-\delta$, it holds that
with probability at least $1-\delta$,
$
\bar J(w^{t}) \le  \bar J(w^{*})+\epsilon,
$
where $t=O(\frac{c_{r}\zeta}{\epsilon})$.  Then, using  \eqref{eq:10}, we have $\|w^*-w^t\| _{2}\le \|w^* - w^0\|_2$, which implies that $\|w^t\|^2_2\le (\| w^0\|_2 + 2\|w^*\|_2)^{2} \le c_{\theta}$ for some constant in $\epsilon>0$.  

\qed

\subsection{Proof of Theorem \ref{thm:2}}
We consider the following map from the parameter space to the concatenation of the output of the model at $x_1, x_2, \cdots, x_n$:
\begin{align}\label{e:map}
f_X: \theta \mapsto \vect([f(x_1, \theta), f(x_2,\theta),\cdots, f(x_n,\theta)]).
\end{align}
By Assumption \ref{a:activation}, the map $f_X$ is analytic in $\theta$. 
 We recall that the Jacobian of the map $f_X$ is defined as
$ {\rm Jac}(f_X)(\theta)=[\partial_k f(x_i,\theta)]_{1\leq i\leq n, 1\leq k\leq d }\in {\mathbb R}^{nm_y\times d}
$. In general, the image of the map $f_X$ may not be a manifold. Sard's theorem asserts that the set of critical values, i.e., the image of the set of critical points $\{\theta: \rank {\rm Jac}(f_X)(\theta)<d\},$ has Lebesgue measure $0$. 
For any noncritical point $\theta$, i.e., $\rank {\rm Jac}(f_X)(\theta)=d$, there exists a small neighborhood $U(\theta)$ of $\theta$, such that over $U(\theta)$, the rank of the Jacobian matrix of $f_X$ is $d$. Then, the rank theorem states that, the image $f_X(U(\theta))$ is a manifold of dimension $d$. Therefore, the volume of the image of the map $f_X$ is well defined, and we have the upper bound:
\vspace{0pt}
\begin{align}\label{e:image}
\nonumber\scalebox{0.98}{$\displaystyle {\rm vol} f_X(\{\theta: \|\theta\|_2^2\leq R^2\}) $}
&\scalebox{0.98}{$\displaystyle\leq {\rm vol}(\mathbb B_d(R))\sup_{\theta\in \mathbb B_d(R)} \det {\rm Jac}f_X(\theta)$}\\
&\scalebox{0.98}{$\displaystyle=\frac{\pi^{d/2}R^d}{\Gamma(d/2+1)}\sup_{\theta\in \mathbb B_d(R)} \det {\rm Jac}f_X(\theta)$},
\end{align}
\vskip-0.3cm
\noindent where $\mathbb B_d(R)$ is the radius-$R$ ball in $\mathbb R^d$.

In the following, we show that if for any point $\vect([y_1, y_2,\cdots, y_n])\in [-1,1]^{n m_y}$, there exists some $\theta\in \mathbb R^d$ with
$
\sum_{i=1}^n \|f(x_i,\theta)-y_i\|_2^2\leq \epsilon,
$ then there exists a large universal constant $\mathfrak c$ such that
$
    \frac{nm_y}{d}-1 \leq \frac{\mathfrak c \beta H\log n}{\log (1/\epsilon)}.
$
If this is the case, then the $\sqrt\epsilon$-neighborhood of the image set of the map $f_X$ covers all possible labels $[-1,1]^{n m_y}$. This fact, combined with  \eqref{e:image}, implies that 
\begin{align}\label{e:keybound}
\scalebox{0.985}{$\displaystyle\epsilon^{(n m_y-d)/2}\frac{\pi^{d/2}R^d}{\Gamma(d/2+1)}\sup_{\theta\in \mathbb B_d(R)} \det {\rm Jac}f_X(\theta)\geq 2^{n m_y}$},
 \end{align}
The following lemma provides an upper bound on $\det {\rm Jac}f_X(\theta)$, which will be used to obtain the lower bound for the Euclidean norm of $\theta$.
\begin{lemma}\label{l:Jac}
We have the following estimates for the determinant of the Jacobian of $f_X$:
\begin{align}\begin{split}\label{e:jacbound}
&\sup_{\theta\in \mathbb B_d(R)} \det {\rm Jac}f_X(\theta)\\
&\leq 
\left(\frac{2(H+1)n}{d}\left(\frac{m_y^2+H+R^2}{H+1}\right)^{H+1}\right)^{d/2}
\end{split}\end{align}
\end{lemma}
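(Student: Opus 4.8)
\emph{Proof proposal.} The plan is to turn the determinant bound into a Frobenius--norm bound and then propagate layer-wise estimates. Since ${\rm Jac}f_X(\theta)\in\mathbb R^{nm_y\times d}$, write its singular values as $s_1\ge\cdots\ge s_d\ge0$ (padding with zeros if $nm_y<d$), so that the $d$-dimensional volume factor is $\det{\rm Jac}f_X(\theta)=\sqrt{\det({\rm Jac}f_X(\theta)^{\top}{\rm Jac}f_X(\theta))}=\prod_{j=1}^d s_j$. By the AM--GM inequality, $\prod_{j=1}^d s_j=\big(\prod_{j=1}^d s_j^2\big)^{1/2}\le\big(\tfrac1d\sum_{j=1}^d s_j^2\big)^{d/2}=\big(\tfrac1d\|{\rm Jac}f_X(\theta)\|_F^2\big)^{d/2}$, and $\|{\rm Jac}f_X(\theta)\|_F^2=\sum_{i=1}^n\|{\rm Jac}_\theta f(x_i,\theta)\|_F^2$ because the rows of ${\rm Jac}f_X(\theta)$ split into $n$ blocks of $m_y$ rows. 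Hence it suffices to prove, for every $\theta\in\mathbb B_d(R)$ and every normalized input $x$ (i.e.\ $\|x\|_2=1$), the single-sample bound $\|{\rm Jac}_\theta f(x,\theta)\|_F^2\le 2(H+1)\big(\tfrac{m_y^2+H+R^2}{H+1}\big)^{H+1}$.

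To estimate $\|{\rm Jac}_\theta f(x,\theta)\|_F^2=\sum_{l=1}^{H+1}\|{\rm Jac}_{\Wbar^{(l)}}f(x,\theta)\|_F^2$ I would use backpropagation. Writing $g^{(l)}=W^{(l)}x^{(l-1)}+b^{(l)}$ for the pre-activations, a direct computation (the column of ${\rm Jac}_{\Wbar^{(l)}}f$ indexed by $W^{(l)}_{qc}$ is $x^{(l-1)}_c$ times the $q$-th column of ${\rm Jac}_{g^{(l)}}f$, and the one indexed by $b^{(l)}_q$ is that $q$-th column) gives $\|{\rm Jac}_{\Wbar^{(l)}}f(x,\theta)\|_F^2=(\|x^{(l-1)}\|_2^2+1)\,\|{\rm Jac}_{g^{(l)}}f\|_F^2$ for $1\le l\le H$, and $\|{\rm Jac}_{\Wbar^{(H+1)}}f(x,\theta)\|_F^2=m_y(\|x^{(H)}\|_2^2+1)$ for the output layer. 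Because $\sigma$ is $1$-Lipschitz (Assumption \ref{a:activation}) and every hidden layer carries the $1/\sqrt{m_l}$ normalization, the Jacobian of $g^{(l)}\mapsto x^{(l)}$ has operator norm at most $1/\sqrt{m_l}$, so the chain rule yields $\|{\rm Jac}_{g^{(l)}}f\|_2\le\|W^{(H+1)}\|_2\,\big(\prod_{k=l+1}^{H}\tfrac{\|W^{(k)}\|_2}{\sqrt{m_k}}\big)\cdot\tfrac1{\sqrt{m_l}}$ and hence $\|{\rm Jac}_{g^{(l)}}f\|_F^2\le m_y\|{\rm Jac}_{g^{(l)}}f\|_2^2$. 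Separately, from $x^{(l)}=\tfrac1{\sqrt{m_l}}\sigma(g^{(l)})$ and $1$-Lipschitzness of $\sigma$, $\|x^{(l)}\|_2\le\tfrac{\|W^{(l)}\|_2}{\sqrt{m_l}}\|x^{(l-1)}\|_2+\tfrac{\|b^{(l)}\|_2}{\sqrt{m_l}}+|\sigma(0)|$ with $\|x^{(0)}\|_2=1$; unrolling this recursion bounds $\|x^{(l)}\|_2^2$ by a sum, with a prefactor polynomial in $l$, of products of the quantities $\|W^{(k)}\|_2^2/m_k$ (plus constant-in-$\sigma(0)$ contributions).

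Plugging the last two bounds into the block formulas, each $\|{\rm Jac}_{\Wbar^{(l)}}f(x,\theta)\|_F^2$ is dominated by a sum of polynomially-many (in $H$) terms, each of which is a product of at most $H+1$ of the nonnegative numbers $\{\|W^{(k)}\|_F^2\}_k$, $\{\|b^{(k)}\|_2^2\}_k$ together with at most one factor $m_y$, divided by widths $m_k\ge1$ (which only decreases the term). The constraint $\sum_{l=1}^{H+1}\big(\|W^{(l)}\|_F^2+\|b^{(l)}\|_2^2\big)=\|\theta\|_2^2\le R^2$ caps the total mass of those weight/bias numbers by $R^2$, the extra factor $m_y$ contributes mass at most $m_y^2$, and the unrolling of the $\|x^{(l)}\|_2$ recursion is what produces the additive $H$; the AM--GM inequality then bounds any such product of $H+1$ nonnegative numbers of total mass at most $m_y^2+H+R^2$ by $\big(\tfrac{m_y^2+H+R^2}{H+1}\big)^{H+1}$. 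Summing over the $H+1$ blocks and absorbing the polynomial prefactors into the constant $2$ gives the single-sample bound, and substituting it into the first paragraph yields exactly the stated estimate, uniformly in $\theta\in\mathbb B_d(R)$.

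I expect the main obstacle to be this last balancing step. One must organize the factorization of each block into exactly $H+1$ pieces so that the widths $m_k$, the accumulated growth of $\|x^{(l)}\|_2$ through the hidden layers, the output dimension $m_y$, and the weight budget $R^2$ all line up additively inside the base $\tfrac{m_y^2+H+R^2}{H+1}$, and one must check the regimes --- wide versus thin networks, small versus large $\|\theta\|_2$ --- in which different terms dominate; in particular, one has to verify that the crude replacements $m_k\ge1$ and $\|W^{(k)}\|_2\le\|W^{(k)}\|_F$ still leave $\big(\tfrac{m_y^2+H+R^2}{H+1}\big)^{H+1}$ large enough to absorb the worst case of $\max_{1\le q\le H}(R^2/q)^q$ that arises for thin deep networks.
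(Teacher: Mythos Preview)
Your skeleton is the paper's: reduce $\det{\rm Jac}f_X$ to $\|{\rm Jac}f_X\|_F^2/d$ via AM--GM on singular values, split $\|{\rm Jac}f_X\|_F^2$ into per-sample, per-layer blocks, bound each block by backpropagation and a recursion for $\|x^{(l)}\|$, and finish with AM--GM on a product of $H+1$ factors. The obstacle you flag, however, is real and would not close as written: unrolling the \emph{additive} recursion $\|x^{(l)}\|_2\le\tfrac{\|W^{(l)}\|_2}{\sqrt{m_l}}\|x^{(l-1)}\|_2+\tfrac{\|b^{(l)}\|_2}{\sqrt{m_l}}+|\sigma(0)|$ produces a sum of $\poly(H)$ many products, and bounding each by $\big(\tfrac{m_y^2+H+R^2}{H+1}\big)^{H+1}$ leaves a $\poly(H)$ prefactor that cannot be ``absorbed into the constant $2$.''

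The paper avoids this entirely by using a \emph{multiplicative} recursion on $1+\|x^{(l)}\|_2^2$ rather than an additive one on $\|x^{(l)}\|_2$. Dropping the helpful $1/m_l$ factor and using that $\sigma$ is $1$-Lipschitz, one gets $\|x^{(l)}\|_2^2\le\|W^{(l)}x^{(l-1)}+b^{(l)}\|_2^2\le(\|W^{(l)}\|_F^2+\|b^{(l)}\|_2^2)(1+\|x^{(l-1)}\|_2^2)$, hence
\[
1+\|x^{(l)}\|_2^2\ \le\ (1+\|x\|_2^2)\prod_{k=1}^{l}\bigl(1+\|W^{(k)}\|_F^2+\|b^{(k)}\|_2^2\bigr).
\]
Combining this with the backprop bound (also with $m_k$'s dropped) gives, for every block,
\[
\|{\rm Jac}_{\Wbar^{(l)}}f(x,\theta)\|_F^2\ \le\ 2\,(m_y^2+\|W^{(H+1)}\|_F^2)\prod_{k=1}^{H}\bigl(1+\|W^{(k)}\|_F^2+\|b^{(k)}\|_2^2\bigr),
\]
a \emph{single} product of $H+1$ nonnegative factors whose sum is at most $m_y^2+H+\|\theta\|_2^2\le m_y^2+H+R^2$. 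One application of AM--GM then yields exactly $\big(\tfrac{m_y^2+H+R^2}{H+1}\big)^{H+1}$, summing over the $H+1$ blocks gives the $2(H+1)$, and summing over samples gives the $n$. This clean product form is what dissolves your ``balancing'' worry about the $\max_q(R^2/q)^q$ regime: every factor is at least $1$, so there is no competition between width, depth, and weight budget to manage. Your computation $m_y(\|x^{(H)}\|_2^2+1)$ for the output block is in fact tighter than the paper's $m_y^2(\|x^{(H)}\|_2^2+1)$; the extra $m_y$ is harmless slack that makes the final AM--GM line up.
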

\begin{proof}[Proof of Lemma \ref{l:Jac}]
For any $\theta$, we denote the singular values of $ {\rm Jac}f_X(\theta)$ as $s_1,s_2,\cdots,s_d$. Then,
\begin{align}\begin{split}\label{e:fbound}
    \det {\rm Jac}f_X(\theta)
    =\prod_{i=1}^d s_i
    &\leq \left(\frac{\sum_{i=1}^d s_i^2}{d}\right)^{d/2}\\
    &=\left(\frac{\|{\rm Jac}f_X(\theta)\|_F^2}{d}\right)^{d/2}.
\end{split}\end{align}
In the following, we derive an upper bound for the Frobenius norm of ${\rm Jac}f_X(\theta)$. Then, inequality \eqref{e:fbound} gives an upper bound for the determinant of ${\rm Jac}f_X(\theta)$.

By the definition of the Jacobian matrix,
\vspace{-5pt}
\begin{align}\begin{split}\label{e:jac}
&\phantom{{}={}}\|{\rm Jac}f_X(\theta)\|_F^2=\sum_{i=1}^n\|\partial_\theta f(x_i,\theta)\|_F^2\\
&=\sum_{l=1}^{H+1} \sum_{i=1}^n\|\partial_{W^{(l)}}f(x_i,\theta)\|^2_F+\|\partial_{b^{(l)}}f(x_i,\theta)\|_F^2.
\end{split}\end{align}
We have the following estimates for the derivatives for $1\leq l\leq H$,
\begin{align}\begin{split}\label{e:derl}
&\phantom{{}={}}\|\partial_{W^{(l)}}f(x_i,\theta)\|^2_F+\|\partial_{b^{(l)}}f(x_i,\theta)\|_F^2\\
&\leq \|W^{(H+1)}\|_F^2(1+\|x^{(l-1)}\|_2^2)\prod_{i=l+1}^H\|W^{(i)}\|_2^2
,
\end{split}\end{align}\vskip-0.25cm
\noindent and for $l=H+1$
\begin{align}\begin{split}\label{e:derH}
&\phantom{{}={}}\|\partial_{W^{(H+1)}}f(x_i,\theta)\|^2_F+\|\partial_{b^{(H+1)}}f(x_i,\theta)\|_F^2\\
&= m_y^2(1+\|x^{(H)}\|_2^2),
\end{split}\end{align}
since the activation function is $1$-Lipschitz.
From the defining relation of a feedforward neural network, and from the fact that the activation function is $1$-Lipschitz, we obtain the following recursive bound for $x^{(l)}$,
\begin{align}\begin{split}\label{e:xbound}
    \|x^{(l)}\|_2^2
    &\leq  \|W^{(l)}x^{(l-1)}+b^{(l)}\|_2^2\\
    &\leq (\|W^{(l)}\|_F^2+\|b^{(l)}\|_F^2)(\|x^{(l-1)}\|_2^2+1).
\end{split}\end{align}
We can iterate  inequality  \eqref{e:xbound} and obtain the following bound for $1+\|x^{(l)}\|_2^2$,
\vspace{-2pt}
\begin{align}\label{e:xbound2}
\scalebox{0.95}{$\displaystyle 1+\|x^{(l)}\|_2^2\leq (1+\|x\|_2^2)\prod_{i=1}^l (1+\|W^{(i)}\|_F^2+\|b^{(i)}\|_F^2).$}
\end{align}
Using   \eqref{e:derl}, \eqref{e:derH}, and \eqref{e:xbound2}, we conclude the following estimate for the Euclidean norm of $\partial_\theta f(x_i,\theta)$,
\begin{align}\label{e:df}
&\nonumber \phantom{{}={}}\|\partial_\theta f(x_i,\theta)\|_F^2\\
&\nonumber =\sum_{l=1}^{H+1} \|\partial_{W^{(l)}}f(x_i,\theta)\|^2_F+\|\partial_{b^{(l)}}f(x_i,\theta)\|_F^2\\
&\nonumber \leq  \scalebox{0.95}{$\displaystyle 2(H+1)(m_y^2+\|W^{(H+1)}\|_F^2)\prod_{i=1}^{H} (1+\|W^{(i)}\|_F^2+\|b^{(i)}\|_F^2)$}\\
&\leq 2(H+1)\left(\frac{ m_y^2+H+\|\theta\|_2^2}{H+1}\right)^{H+1}
\end{align}
where the last line follows the AM--GM inequality. Lemma \ref{l:Jac} follows from combining \eqref{e:fbound}, \eqref{e:jac} and \eqref{e:df}, and  noticing $\theta\in \mathbb B_d(R)$.
\end{proof}

Using Lemma \ref{l:Jac}, we can  finish the proof of Theorem \ref{thm:2}. By substituting  \eqref{e:jacbound} into \eqref{e:keybound}, and raising both sides to the $1/d$-th power, we obtain the following key estimate
\begin{align}\label{e:keybound1}
    \frac{CRn}{d^{3/2}}\frac{(m_y^2+H+R^2)^{H+1}}{H^H}\geq \left(\frac{2}{\sqrt{\epsilon}}\right)^{n m_y/d-1},
\end{align}
where $C$ is a universal constant. 
It follows that there exists a large universal constant $\mathfrak c$ such that if $
    \frac{n m_y}{d}-1\geq \frac{\mathfrak c \beta H\log n}{\log (1/\epsilon)},
$ then $R\geq n^{\beta}$. This finishes the proof of Theorem \ref{thm:2}.
\qed

\section{Generalization bound and experiments}

\begin{figure}[t!]
\centering
\begin{subfigure}[b]{0.6\columnwidth}\centering
  \includegraphics[width=\columnwidth,height=0.1\columnwidth]{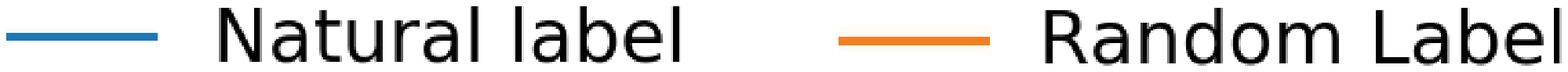}
\end{subfigure}
\begin{subfigure}[b]{0.49\columnwidth}\centering
  \includegraphics[width=\columnwidth,height=0.6\columnwidth]{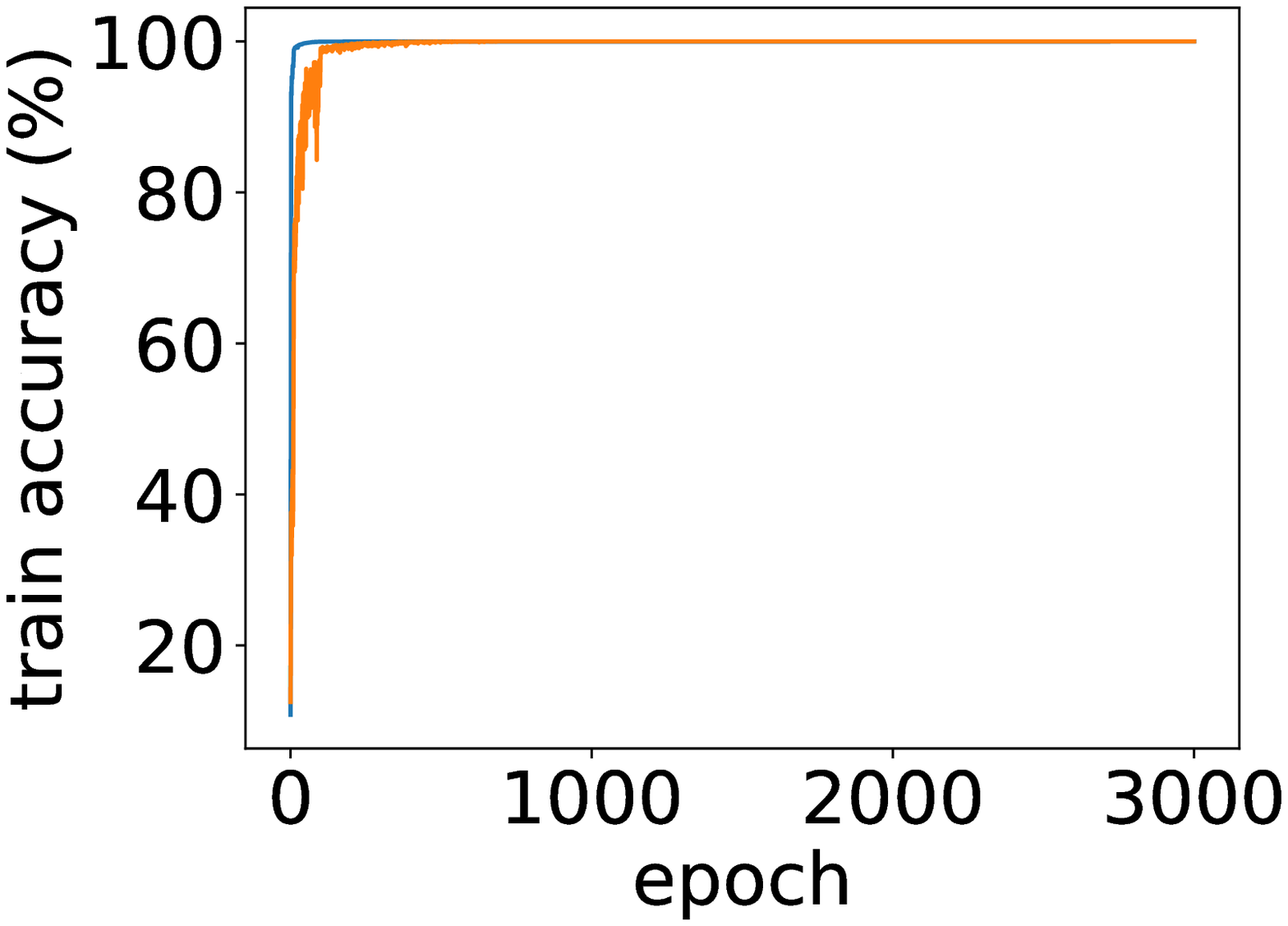}
  \vspace{-15pt}
  \caption{Training accuracy} \label{fig:2:1}
  \vspace{8pt}
\end{subfigure}
\begin{subfigure}[b]{0.49\columnwidth}
  \includegraphics[width=\columnwidth,height=0.6\columnwidth]{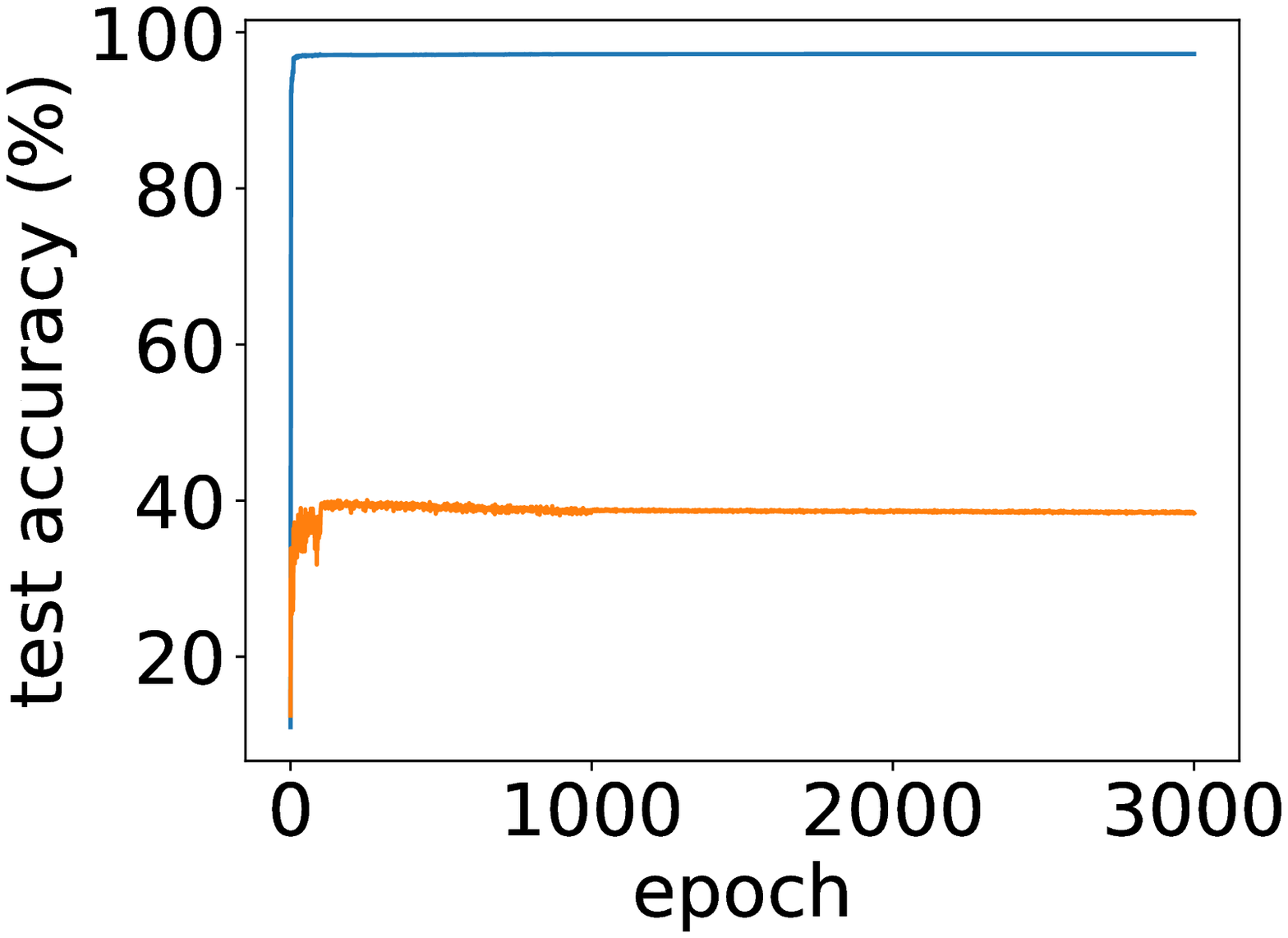}
  \vspace{-15pt}
  \caption{Test accuracy} \label{fig:2:2}
  \vspace{8pt}
\end{subfigure}
\begin{subfigure}[b]{0.49\columnwidth}\centering
  \includegraphics[width=\columnwidth,height=0.6\columnwidth]{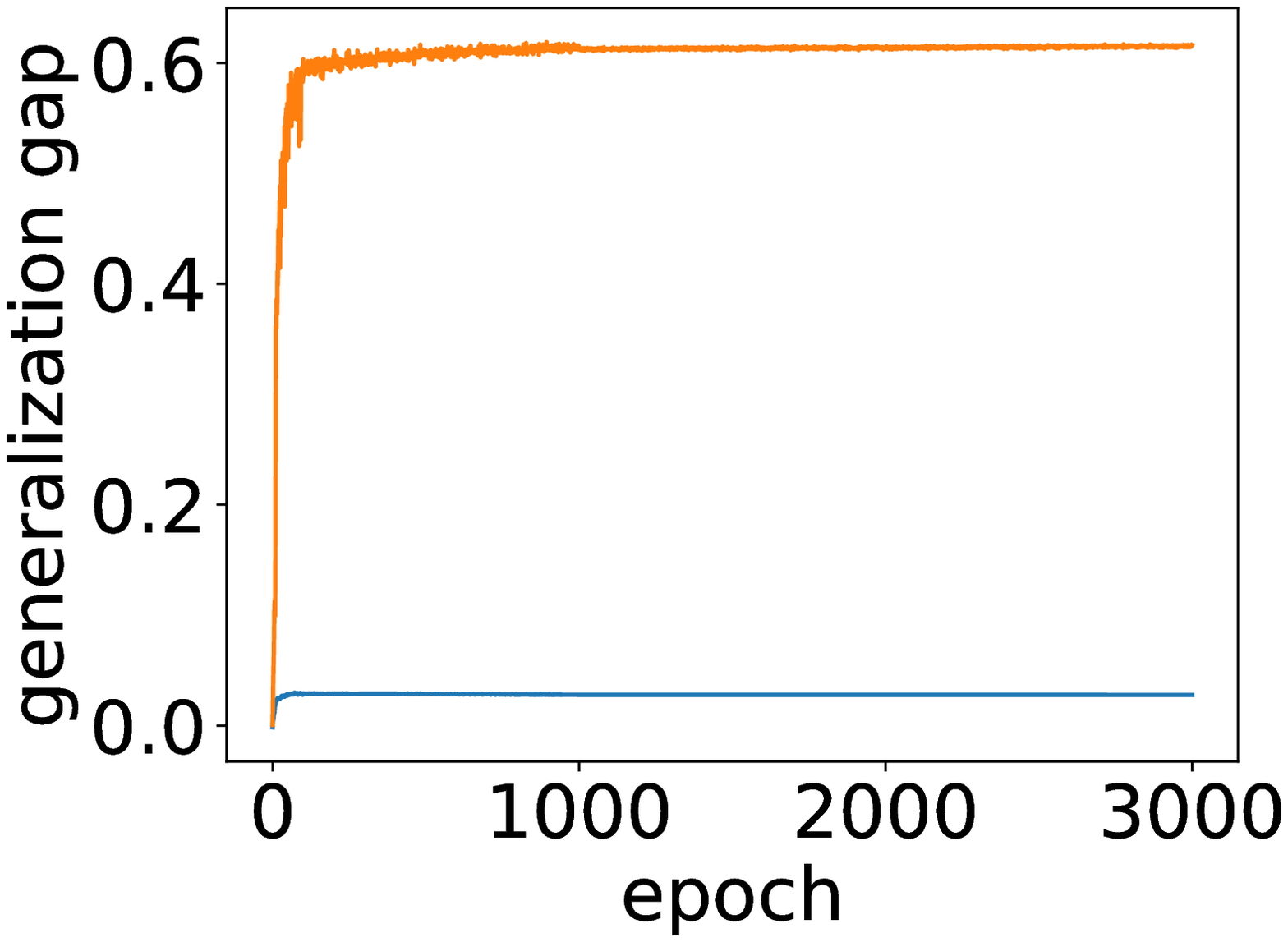}
  \vspace{-15pt}
  \caption{Generalization gap} \label{fig:2:3}
\end{subfigure}
\begin{subfigure}[b]{0.49\columnwidth}
  \includegraphics[width=\columnwidth,height=0.6\columnwidth]{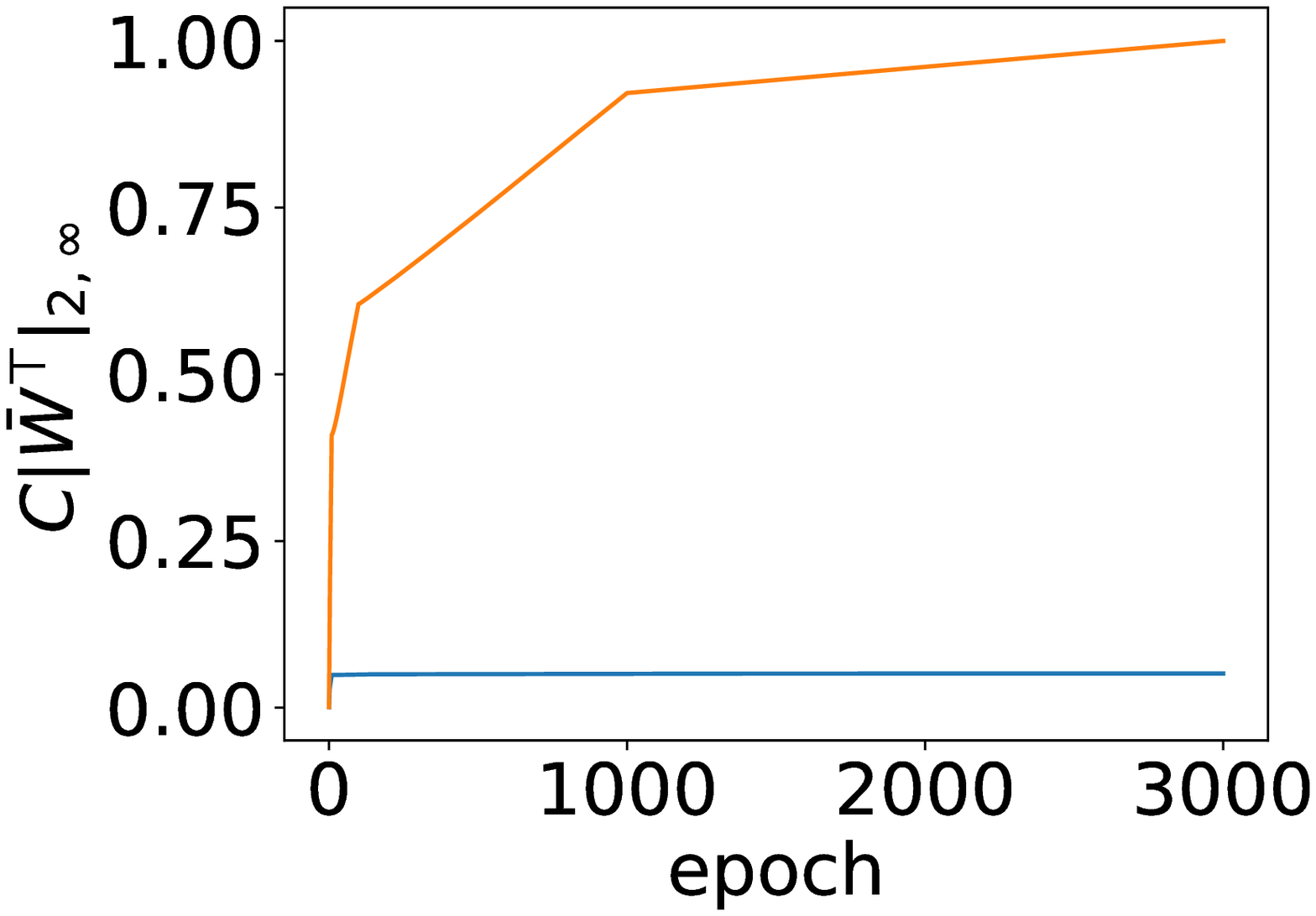}
  \vspace{-15pt}
  \caption{Weight norm} \label{fig:2:4}
\end{subfigure}
\caption{Training accuracy, test accuracy, generalization gap, and weight norm for a neural network  of practical size with the trainability guarantee, which  is constructed in the proof of Theorem \ref{thm:1}. Even though the trainable neural network has the capacity to memorize any dataset, it  generalizes well with the natural label dataset, but not with the random label dataset (where a natural label was replaced with
    a random label with probability 0.5). This behavior matches  the growth of the weight norm as predicted by Proposition \ref{prop:ge}.} 
\label{fig:2}
\end{figure}

The previous sections presented  the  construction of    deep neural network architectures of practical sizes, with the trainability guarantee. A major  question remaining now is whether the constructed    neural networks  can generalize to unseen data points after training, which is the focus of this section. 


This section considers multiclass classification with the one-hot vector $y \in \{0,1\}^{m_{y}}$. Let $j(y) \in \{1,\dots,m_y\}$ be the index of the one-hot vector $y$ having entry one as $y_{j(y)}=1$. Let $\ell_{01}$ represent the 0--1 loss as $\ell(f(x,\theta),y)= \one\{\argmax_{j} f(x,\theta)_{j} \neq j(y)\} $, with which we can write the  expected test error  $\EE_{(x,y)}[\ell_{01}(f(x,\theta),y)]$. Let $\ell_\rho$ be a standard  multiclass margin loss defined by $\ell_\rho(f(x,\theta),y)=\min(\max(1-(f(x,\theta)_{j(y)}- \max_{j' \neq j(y)} f(x,\theta)_{j'})/\rho,0),1)$.
We set  $f$ and $\eta$ as constructed in the proof of Theorem 1 (i.e.,  $m_1,\allowbreak  m_2,\allowbreak \dots, \allowbreak m_{H-2}= O(H^2\allowbreak \log(Hn^2/\delta))$,  $m_{H-1}= O(\log(\allowbreak Hn^2/\delta))$, and $m_H= O(n)$).

The following proposition provides a data-dependent generalization bound, which shows  that the  trainable deep networks can generalize to unseen data points if the   weight norm    turns out to be small after training:

\begin{proposition} \label{prop:ge}
 Fix  $\rho>0$ and $\varsigma\ge 1$. Then, for any $\delta'>0$, with probability at least $1-\delta-\delta'$ over  $\theta^0$ and i.i.d. $((x_i,y_i))_{i=1}^n$, the following holds for any $\theta^t$ generated by the gradient descent (as $\theta^{t}=\theta^{t-1}-\eta \odot\nabla J(\theta^{t-1})$):
\begin{align*}
&\EE_{(x,y)}[\ell_{01}(f(x,\theta^t),y)]- \frac{1}{n} \sum_{i=1}^n \ell_{\rho}(f(x_{i},\theta^t ),y_{i}) 
\\ &\le   \frac{cm_y^2 \ceil{\varsigma\|(\Wbar^t)\T\|_{2,\infty}}}{\rho\varsigma  \sqrt{n}}\frac{}{} + \sqrt{\frac{\ln \frac{\pi^2 \ceil{\varsigma\|(\Wbar^t)\T\|_{2,\infty} }^2}{\delta'}}{2n}}.
\end{align*}
for some constant $c=O(1)$.
\end{proposition}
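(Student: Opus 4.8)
The plan is to exploit the fact that, along the gradient-descent trajectory constructed in the proof of Theorem \ref{thm:1}, the learning rate has $\eta_i=0$ on every coordinate of the hidden layers $1,\dots,H$, so that only the output matrix $\Wbar^{(H+1)}$ ever moves. Consequently the features $z(x):=[(x^{(H)})\T,1]\T$ form a \emph{fixed}, data-independent random map determined by $\theta^0$ alone, and every iterate is a linear predictor $f(x,\theta^t)=\Wbar^t z(x)$ with $\Wbar^t=(\Wbar^{(H+1)})^t$. So the strategy is: (i) control the feature norms with Corollary \ref{c:lengthest}; (ii) prove a multiclass margin generalization bound for each norm-bounded linear class $\mathcal G_B:=\{x\mapsto\Wbar z(\cdot):\ \|\Wbar\T\|_{2,\infty}\le B\}$; and (iii) discretize over $B$ to absorb the data-dependence of $\Wbar^t$.

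For (i) I would apply Corollary \ref{c:lengthest} with $\beta\asymp\sqrt{\log(Hn^2/\delta)}$ and a union bound over the $n$ samples and the $H$ layers; since $\theta^0$ is independent of the i.i.d.\ sample and each $\|x_i\|_2=1$, this yields an event $\mathcal A$ of probability at least $1-\delta$ on which $\tfrac1n\sum_{i=1}^n\|z(x_i)\|_2^2=O(1)$ --- the constant $c_z$ of the proof of Theorem \ref{thm:1}. For (ii), fix an integer $k\ge1$ and consider $\mathcal G_{k/\varsigma}$: writing $w_1,\dots,w_{m_y}$ for the rows of $\Wbar$, each has $\|w_j\|_2\le\|\Wbar\T\|_{2,\infty}\le k/\varsigma$. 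The margin loss $\ell_\rho$ is $\tfrac1\rho$-Lipschitz in the margin, and the margin is a coordinatewise-Lipschitz function of $f(x)\in\RR^{m_y}$, so a standard multiclass contraction-and-decomposition argument bounds the empirical Rademacher complexity of $\ell_\rho\circ\mathcal G_{k/\varsigma}$ by $\tfrac{cm_y}{\rho}\sum_{j=1}^{m_y}\widehat{\mathfrak R}_n(\{x\mapsto\langle w,z(x)\rangle:\|w\|_2\le k/\varsigma\})$; each scalar linear class has empirical Rademacher complexity at most $\tfrac{k}{\varsigma\sqrt n}\sqrt{\tfrac1n\sum_i\|z(x_i)\|_2^2}$, which is $O(k/(\varsigma\sqrt n))$ on $\mathcal A$, so the composed class has complexity $O(m_y^2 k/(\rho\varsigma\sqrt n))$ (one $m_y$ from summing the $m_y$ output coordinates, one from the multiclass margin operator). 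Since $\ell_{01}\le\ell_\rho$ pointwise and $\ell_\rho\in[0,1]$, the standard Rademacher-complexity generalization bound for $[0,1]$-valued losses (in the form $\EE[\phi]\le\tfrac1n\sum_i\phi(z_i)+2\mathfrak R_n+\sqrt{\ln(1/\delta_k)/(2n)}$) gives, for each fixed $\theta^0$, an event of probability at least $1-\delta_k$ over the sample on which every $g\in\mathcal G_{k/\varsigma}$ obeys $\EE_{(x,y)}[\ell_{01}(g(x),y)]\le\tfrac1n\sum_i\ell_\rho(g(x_i),y_i)+O(m_y^2 k/(\rho\varsigma\sqrt n))+\sqrt{\ln(1/\delta_k)/(2n)}$.

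For (iii), set $\delta_k=\tfrac{6\delta'}{\pi^2k^2}$ so that $\sum_{k\ge1}\delta_k=\delta'$, union-bound over $k$, and intersect with $\mathcal A$, obtaining an event of probability at least $1-\delta-\delta'$ on which the previous display holds simultaneously for all $k$. Given an iterate $\theta^t$, put $k^\star=\ceil{\varsigma\|(\Wbar^t)\T\|_{2,\infty}}$; then $f(\cdot,\theta^t)\in\mathcal G_{k^\star/\varsigma}$, and substituting $k=k^\star$ --- using $\ln(1/\delta_{k^\star})=\ln(\pi^2(k^\star)^2/(6\delta'))\le\ln(\pi^2(k^\star)^2/\delta')$ --- reproduces the claimed inequality, the universal constant $c$ absorbing the symmetrization factor and $\sqrt{c_z}$. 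No union bound over the iteration index $t$ is needed: each iterate is handled by the single slice $\mathcal G_{k^\star/\varsigma}$ to which it belongs. The hard part is that $\Wbar^t$ is a function of the very sample on which the generalization gap is measured, so no hypothesis class can be fixed in advance; the $\propto k^{-2}$ discretization over the attainable values of $\ceil{\varsigma\|(\Wbar^t)\T\|_{2,\infty}}$ is precisely what turns this data-dependence into the mild penalty $\sqrt{\ln(\pi^2\ceil{\cdot}^2/\delta')/(2n)}$, while the collapse of the deep network to a bounded-norm linear model over random features --- whose norms Corollary \ref{c:lengthest} controls --- is what keeps the Rademacher term at $O(m_y^2\ceil{\cdot}/(\rho\varsigma\sqrt n))$, free of any depth or width dependence. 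The remaining care is bookkeeping: routing the feature-norm event $\mathcal A$ (high-probability over $\theta^0$, contributing $\delta$) past the sample-based margin concentration (for each fixed $\theta^0$, contributing $\delta'$) so that the final statement involves only the post-hoc quantity $\|(\Wbar^t)\T\|_{2,\infty}$.
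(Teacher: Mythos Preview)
Your proposal is correct and follows essentially the same route as the paper: both arguments reduce the trained network to a linear predictor $\Wbar z(\cdot)$ over the fixed random features $z(x)=[(x^{(H)})\T,1]\T$, stratify the hypothesis space into norm shells $\{\Wbar:\varsigma\|\Wbar\T\|_{2,\infty}\in[k-1,k)\}$, apply a multiclass margin Rademacher bound on each shell, and union-bound via $\delta_k=6\delta'/(\pi^2k^2)$. The paper invokes the Koltchinskii--Panchenko inequality directly (yielding the $2m_y^2/\rho$ prefactor) and bounds the population Rademacher complexity by Cauchy--Schwarz plus Jensen, whereas you sketch the same $O(m_y^2/\rho)$ factor via contraction/decomposition and bound the empirical Rademacher complexity of the scalar linear class; these are cosmetic differences, and your handling of the $\theta^0$-event $\mathcal A$ versus the sample-event is the same bookkeeping the paper performs.
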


Figure \ref{fig:2} shows the training accuracy, test accuracy, generalization gap, and  weight norm of one of the neural networks trained with our trainability guarantee. In the figure,  the trainable  deep neural network generalizes well with a  natural dataset, while it does not generalize well with a random dataset, as predicted by the values of the weight norm.  Here, we use the softplus activation, $H=2$, $m_{1}= 16\allowbreak \log(\allowbreak Hn^2/\delta)$, and $m_H=4n$. We employ the MNIST dataset \cite{lecun1998gradient}, which is a popular dataset for recognizing handwritten digits with  $m_x=784$ and $m_y=10$. For the random-label experiment, the natural labels in the MNIST dataset are replaced by randomly generated labels. The generalization gap plotted in subfigure \ref{fig:2:3} is the value of (training accuracy - test accuracy)/100. The weight norm plotted in subfigure \ref{fig:2:4} is the value of $C\|\Wbar\T\|_{2,\infty}$, where $C$ is the normalization  constant.

\section{Conclusions}
In this paper, we have proven that there are trainable and generalizable deep neural networks of sizes growing only linearly in the dataset size $n$. We have shown that this is already the optimal rate in terms of the dataset size $n$ and that it cannot be improved further, except by a logarithmic factor. In terms of the rate, these theoretical results are consistent with the  practical observations and  previous expressivity theories. Future work involves   improvements in terms of constant and logarithmic factors. 

Looking forward, the formalization of the probable trainability $\Pcal_{n,H,\delta}$ would contribute to set a common language in the future studies on trainability. For example, one can consider data-dependent probable trainability by redefining $\Scal_{n}$ and architecture-dependent probable trainability by reformulating  $\Fcal^H_d$, in the definition of $\Pcal_{n,H,\delta}$. Our trainability results  differ from recent results of  practical guarantees on loss landscape with representation learning effects \cite{kawaguchi2019depth,kawaguchi2019every}.    

\section*{Appendix}
\vspace{-2pt}

\subsection{Proof of Proposition \ref{prop:ge}}
\vspace{-2pt}

Define $\Theta_k=\{\theta \in \RR^{d}: (\exists \Wbar \in \Wcal_k)[\theta=\psi(\theta^{0},\Wbar)]\}$ for all $k \in \NN^+$, where $\Wcal_k=\{\Wbar\in \RR^{m_{y} \times (m_H+1)}:k-1\le \varsigma\|\Wbar\T\|_{2,\infty}< k]\}$. Let $\Tcal(\Theta_k)=\{x \mapsto f(x,\theta)_j: \theta \in \Theta_{k}, j \in \Jcal\}$ where $\Jcal=\{1,\dots,m_y\}$. Then, the previous result \cite{koltchinskii2002empirical} implies that for any $\delta'_k>0$, with probability at least $1-\delta'_k$,
the following holds for all $\theta\in \Theta_k$:
$
\EE_{(x,y)}[\ell_{01}(f(x, \theta),y)] - \frac{1}{n} \sum_{i=1}^n \ell_{\rho}(f(x_{i}, \theta),y_{i})
\le \frac{2m_y^2}{\rho}\Rfra_{n}(\Tcal(\Theta_k)) + \sqrt{\frac{\ln (1/\delta'_k)}{2n}},
$ where   $\Rfra_{n}(\Tcal(\Theta_k))$ is the  Rademacher
complexity of the set $\Tcal(\Theta_k)$, given by:
$
\Rfra_{n}(\Tcal(\Theta_k)) =\EE_{S,\xi} \left[\sup_{\theta \in \Theta, j \in\Jcal} \frac{1}{n}\sum_{i=1}^n  \xi_{i} f(x_{i},\theta)_j \right].
$   
Here,  $\xi_{1},\dots,\xi_{n}$ are independent uniform random variables taking values in $\{-1,1\}$ (i.e., Rademacher variables).  

Set $\delta'_{k}=\delta' \frac{6}{\pi^2 k^2}$, with which $\sum_{k=1}^\infty \delta'_{k}=\delta'$. By taking the union bound over $k\in \NN^+$,  for any $\delta'>0$, with probability at least $1-\delta'$, the following holds for all $k \in \NN^+$ and all $\theta\in \Theta_k$:
\vspace{-10pt}
\begin{align} \label{eq:11}
\nonumber &\EE_{(x,y)}[\ell_{01}(f(x, \theta),y)]- \frac{1}{n} \sum_{i=1}^n \ell_{\rho}(f(x_{i}, \theta),y_{i}) 
\\ &\le   \frac{2m_y^2}{\rho}\Rfra_{n}(\Tcal(\Theta_k)) + \sqrt{\frac{\ln \frac{\pi^2 \ceil{\varsigma\|\Wbar\T\|_{2,\infty} }^2}{\delta'}}{2n}}.
\end{align}
By using the Cauchy--Schwarz inequality,
$\Rfra_{n}(\Tcal(\Theta_k))
 \le \frac{\ceil{\varsigma\|\Wbar\T\|_{2,\infty}}}{\varsigma n} \EE_{S,\xi} \left[\left\| \sum_{i=1}^n  \xi_{i} z_{i} \right\|_2  \right]$. By using linearity of expectation and  Jensen's inequality (since the square root is concave in its domain), 
$
\EE_{S,\xi} [\| \sum_{i=1}^n  \xi_{i} z_{i} \|_2  ]
 \le ( \EE_{S} \sum_{i=1}^{n} \sum_{j=1}^{n} \EE_{\xi}[\xi_i \xi_j]z_{i}^\top z_{j} )^{1/2} 
 = ( \sum_{i=1}^{n} \EE_{S} [ \left\| z_{i}\right\|_2^2]  )^{1/2}
 \le (c/2) \sqrt {n},
$
where we utilize the fact that, with probability at least $1-\delta$,  $\left\| z_{i}\right\|_2 \le c/2$ for some constant $c=O(1)$, as shown in the proof of Theorem \ref{thm:1}. 
Therefore, with probability at least $1-\delta$, 
\begin{align} \label{eq:12}
\Rfra_{n}(\Tcal(\Theta_k)\le\frac{ (c/2) \ceil{\varsigma\|W\T\|_{2,\infty}}}{\varsigma  \sqrt{n}}.
\end{align}
The desired statement follows by taking the union bound for the events of  \eqref{eq:11} and \eqref{eq:12}. \qed




\addtolength{\textheight}{-12cm}  


\bibliography{all}

\newcommand{\noopsort}[1]{} \newcommand{\printfirst}[2]{#1}
  \newcommand{\singleletter}[1]{#1} \newcommand{\switchargs}[2]{#2#1}
\begin{thebibliography}{10}
\providecommand{\url}[1]{#1}
\csname url@rmstyle\endcsname
\providecommand{\newblock}{\relax}
\providecommand{\bibinfo}[2]{#2}
\providecommand\BIBentrySTDinterwordspacing{\spaceskip=0pt\relax}
\providecommand\BIBentryALTinterwordstretchfactor{4}
\providecommand\BIBentryALTinterwordspacing{\spaceskip=\fontdimen2\font plus
\BIBentryALTinterwordstretchfactor\fontdimen3\font minus
  \fontdimen4\font\relax}
\providecommand\BIBforeignlanguage[2]{{%
\expandafter\ifx\csname l@#1\endcsname\relax
\typeout{** WARNING: IEEEtran.bst: No hyphenation pattern has been}%
\typeout{** loaded for the language `#1'. Using the pattern for}%
\typeout{** the default language instead.}%
\else
\language=\csname l@#1\endcsname
\fi
#2}}

\bibitem{DBLP:journals/jc/Baum88}
\BIBentryALTinterwordspacing
E.~B. Baum, ``On the capabilities of multilayer perceptrons,'' \emph{J.
  Complexity}, vol.~4, no.~3, pp. 193--215, 1988. [Online]. Available:
  \url{https://doi.org/10.1016/0885-064X(88)90020-9}
\BIBentrySTDinterwordspacing

\bibitem{DBLP:journals/tc/Cover65}
\BIBentryALTinterwordspacing
T.~M. Cover, ``Geometrical and statistical properties of systems of linear
  inequalities with applications in pattern recognition,'' \emph{{IEEE} Trans.
  Electronic Computers}, vol.~14, no.~3, pp. 326--334, 1965. [Online].
  Available: \url{https://doi.org/10.1109/PGEC.1965.264137}
\BIBentrySTDinterwordspacing

\bibitem{DBLP:journals/tnn/Huang03}
\BIBentryALTinterwordspacing
G.~Huang, ``Learning capability and storage capacity of two-hidden-layer
  feedforward networks,'' \emph{{IEEE} Trans. Neural Networks}, vol.~14, no.~2,
  pp. 274--281, 2003. [Online]. Available:
  \url{https://doi.org/10.1109/TNN.2003.809401}
\BIBentrySTDinterwordspacing

\bibitem{DBLP:journals/tnn/HuangH91}
\BIBentryALTinterwordspacing
S.~Huang and Y.~Huang, ``Bounds on the number of hidden neurons in multilayer
  perceptrons,'' \emph{{IEEE} Trans. Neural Networks}, vol.~2, no.~1, pp.
  47--55, 1991. [Online]. Available: \url{https://doi.org/10.1109/72.80290}
\BIBentrySTDinterwordspacing

\bibitem{DBLP:journals/tnn/HuangB98}
\BIBentryALTinterwordspacing
G.~Huang and H.~A. Babri, ``Upper bounds on the number of hidden neurons in
  feedforward networks with arbitrary bounded nonlinear activation functions,''
  \emph{{IEEE} Trans. Neural Networks}, vol.~9, no.~1, pp. 224--229, 1998.
  [Online]. Available: \url{https://doi.org/10.1109/72.655045}
\BIBentrySTDinterwordspacing

\bibitem{yamasaki1993lower}
M.~Yamasaki, ``The lower bound of the capacity for a neural network with
  multiple hidden layers,'' in \emph{International Conference on Artificial
  Neural Networks}.\hskip 1em plus 0.5em minus 0.4em\relax Springer, 1993, pp.
  546--549.

\bibitem{Yun2018small}
C.~Yun, S.~Sar, and A.~Jadbabaie, ``Small relu networks are powerful
  memorizers: a tight analysis of memorization capacity,'' \emph{arXiv
  preprint, arXiv:1810.07770}, 2018.

\bibitem{hardt2016identity}
M.~Hardt and T.~Ma, ``Identity matters in deep learning,'' in
  \emph{International Conference on Learning Representations}, 2017.

\bibitem{DBLP:conf/icml/Nguyen018}
\BIBentryALTinterwordspacing
Q.~Nguyen and M.~Hein, ``Optimization landscape and expressivity of deep
  cnns,'' in \emph{Proceedings of the 35th International Conference on Machine
  Learning, {ICML} 2018, Stockholmsm{\"{a}}ssan, Stockholm, Sweden, July 10-15,
  2018}, 2018, pp. 3727--3736. [Online]. Available:
  \url{http://proceedings.mlr.press/v80/nguyen18a.html}
\BIBentrySTDinterwordspacing

\bibitem{bartlett1999hardness}
P.~Bartlett and S.~Ben-David, ``Hardness results for neural network
  approximation problems,'' in \emph{European Conference on Computational
  Learning Theory}.\hskip 1em plus 0.5em minus 0.4em\relax Springer, 1999, pp.
  50--62.

\bibitem{blum1989training}
A.~Blum and R.~L. Rivest, ``Training a 3-node neural network is np-complete,''
  in \emph{Advances in neural information processing systems}, 1989, pp.
  494--501.

\bibitem{livni2014computational}
R.~Livni, S.~Shalev-Shwartz, and O.~Shamir, ``On the computational efficiency
  of training neural networks,'' in \emph{Advances in Neural Information
  Processing Systems}, 2014, pp. 855--863.

\bibitem{li2018learning}
Y.~Li and Y.~Liang, ``Learning overparameterized neural networks via stochastic
  gradient descent on structured data,'' in \emph{Advances in Neural
  Information Processing Systems}, 2018, pp. 8157--8166.

\bibitem{du2018gradient1}
S.~S. Du, X.~Zhai, B.~Poczos, and A.~Singh, ``Gradient descent provably
  optimizes over-parameterized neural networks,'' \emph{In ICLR,
  arXiv:1810.02054}, 2018.

\bibitem{song2019quadratic}
Z.~Song and X.~Yang, ``Quadratic suffices for over-parametrization via matrix
  chernoff bound,'' \emph{arXiv preprint arXiv:1906.03593}, 2019.

\bibitem{allen2018convergence}
Z.~Allen-Zhu, Y.~Li, and Z.~Song, ``A convergence theory for deep learning via
  over-parameterization,'' \emph{In ICML, arXiv:1811.03962}, 2018.

\bibitem{du2018gradient2}
S.~S. Du, J.~D. Lee, H.~Li, L.~Wang, and X.~Zhai, ``Gradient descent finds
  global minima of deep neural networks,'' \emph{ICML, arXiv:1811.03804}, 2018.

\bibitem{zou2018stochastic}
D.~Zou, Y.~Cao, D.~Zhou, and Q.~Gu, ``Stochastic gradient descent optimizes
  over-parameterized deep relu networks,'' \emph{arXiv preprint
  arXiv:1811.08888}, 2018.

\bibitem{zou2019improved}
D.~Zou and Q.~Gu, ``An improved analysis of training over-parameterized deep
  neural networks,'' \emph{arXiv preprint arXiv:1906.04688}, 2019.

\bibitem{mityagin2015zero}
B.~Mityagin, ``The zero set of a real analytic function,'' \emph{arXiv preprint
  arXiv:1512.07276}, 2015.

\bibitem{lecun1998gradient}
Y.~LeCun, L.~Bottou, Y.~Bengio, and P.~Haffner, ``Gradient-based learning
  applied to document recognition,'' \emph{Proceedings of the IEEE}, vol.~86,
  no.~11, pp. 2278--2324, 1998.

\bibitem{kawaguchi2019depth}
\BIBentryALTinterwordspacing
K.~Kawaguchi and Y.~Bengio, ``Depth with nonlinearity creates no bad local
  minima in {ResNets},'' \emph{Neural Networks}, vol. 118, pp. 167--174, 2019.
  [Online]. Available: \url{arXiv:1810.09038}
\BIBentrySTDinterwordspacing

\bibitem{kawaguchi2019every}
\BIBentryALTinterwordspacing
K.~Kawaguchi, J.~Huang, and L.~P. Kaelbling, ``Every local minimum value is the
  global minimum value of induced model in non-convex machine learning,''
  \emph{Neural Computation, to appear}, 2019. [Online]. Available:
  \url{arXiv:1904.03673}
\BIBentrySTDinterwordspacing

\bibitem{koltchinskii2002empirical}
V.~Koltchinskii and D.~Panchenko, ``Empirical margin distributions and bounding
  the generalization error of combined classifiers,'' \emph{Annals of
  Statistics}, pp. 1--50, 2002.

\end{thebibliography}
\bibliographystyle{IEEEtran}

\end{document}